\newtheorem{defn}{Definition}
\newtheorem{thm}[defn]{Theorem}
\newtheorem{lem}[defn]{Lemma}
\newtheorem{cor}[defn]{Corollary}
\newtheorem{pro}[defn]{Proposition}
\newtheorem{assum}[defn]{Assumption}
\newcommand{\bs}{\boldsymbol}
\newcommand{\huber}{\ell_{\textnormal{H}}}
\newcommand{\hinge}{\ell_{0}}
\newcommand{\whL}{\widehat L}
\newcommand{\whF}{\widehat F}
\newcommand{\x}{\mathbf{x}}
\newcommand{\R}{\mathbb{R}}
\newcommand{\E}{\mathbb{E}}
\newcommand\norm[1]{\left\lVert#1\right\rVert}
\newcommand{\mathbbm}[1]{\text{\usefont{U}{bbm}{m}{n}#1}}
\DeclareMathOperator*{\argmin}{arg\,min}
\begin{document}

\title{A Differentially Private Weighted Empirical Risk Minimization Procedure and its Application to Outcome Weighted Learning}

\author{Spencer Giddens,
Yiwang Zhou, 
Kevin R. Krull,
Tara M. Brinkman, 
Peter X. K. Song,
Fang Liu 
       
\thanks{Manuscript received xxx; revised xxxx. \emph{(Spencer Giddens and Yiwang Zhou are co-first authors; Corresponding author: Fang Liu (fliu2@nd.edu)).}}
\thanks{S.~Giddens and F.~Liu are at the University of Notre Dame, Notre Dame, IN 46556, USA. Y.~Zhou K.~R.~Krull, T.~M.~Brinkman ae at St. Jude Children's Research Hospital, Memphis, TN 38138, USA. P.~X.K.~ Song is at the University of Michigan, Ann Arbor, MI 48109, USA.
This work was supported by the Schmitt Fellowship, Lucy Graduate Scholarship and Technology Ethics Center Research Assistantship to Giddens at Notre Dame, and was partially supported by an NSF grant DMS2113564 to Song.
Zhou, Krull, and Brinkman were supported in their work at St. Jude Children’s Research Hospital by the American Lebanese Syrian Associated Charities. 
The authors acknowledge the helpful insights from several discussions with Drs. Jie Ding and Ganghua Wang and the suggestions of three reviewers and an Associate Editor that helped improve the quality of the manuscript.}
\thanks{This paper has supplementary downloadable material available at xxxx.
The material includes proofs of theoretical results, details on the hyperparameter tuning method, and additional experiment results.
Contact fliu2@nd.edu for further questions about this work.}
}

\markboth{IEEE Transactions on Information Forensics and Security,~Vol.~xx, No.~x, xx~20xx}%
{Shell \MakeLowercase{\textit{et al.}}: A Sample Article Using IEEEtran.cls for IEEE Journals}


\maketitle

\begin{abstract}
Data used to train predictive models via empirical risk minimization (ERM) often contain sensitive personal information. While differential privacy (DP) provides mathematically provable bounds to protect such data, previous work has focused almost exclusively on unweighted ERM. We consider weighted ERM (wERM) -- an important generalization where individual contributions to the objective function vary. We propose the first DP algorithm for general wERM with formal privacy guarantees and derive both its empirical and population excess risk bounds. Crucially, this general wERM framework provides a pathway for deriving privacy-preserving learning methods for individualized treatment rules, including the popular outcome-weighted learning (OWL) approach.
We evaluate DP-wERM  applied to OWL in simulated and real data experiments.
Our empirical results demonstrate that training OWL models via wERM provides strong DP guarantees while maintaining robust performance, proving the method is practical for sensitive, real-world data.  
\end{abstract}\vspace{-6pt}
\begin{IEEEkeywords}
Differential privacy, global sensitivity, individualized treatment rule, weighted empirical risk minimization.
\end{IEEEkeywords}

\section{Introduction}\label{sec:intro}
\IEEEPARstart{O}{utcome} weighted learning (OWL) \cite{Zhao2012} is an effective machine learning (ML) technique for delivering individualized medical or business decisions.
For example, in a two-armed randomized trial, where participants are randomly assigned to one of two treatments, a ``benefit'' metric is measured to assess the effectiveness of the treatment for each participant.
The OWL loss function represents the benefit-weighted average shortfall between the randomly assigned treatment and predicted labels, minimization of which will lead to a classifier that can be used to predict the optimal treatment with the highest expected benefit for an individual. 

The current literature on OWL assumes that data scientists have unrestricted access to training data.
This assumption proves impractical in real-world applications, where, for example, sensitive health information exists in data sources such as randomized clinical trials, electronic health records, and national surveys, necessitating the protection of data privacy for ethical and legal purposes when training OWL models. 

Although our work is primarily motivated by the OWL framework, we emphasize that the primary novelty in our contribution is a framework for obtaining DP guarantees in the more general weighted empirical risk minimization (wERM) setting, which we refer to as \emph{differentially private weighted empirical risk minimization (DP-wERM)}.
It can be shown that DP-OWL is a special case of DP-wERM.
By applying the DP-wERM framework to train DP-OWL models, we aim to provide a practical solution for training OWL models on sensitive data, facilitating information sharing with mathematical privacy guarantees.

\vspace{-6pt}\subsection{Background}\label{subsec:background}
Individualized treatment rules (ITRs) are fundamental in precision medicine \cite{national2011toward, collins2015new} -- an innovative framework for maximizing the clinical benefit by taking into account individual heterogeneity to tailor treatments for subgroups of patients \cite{Qian2011} instead of relying on a one-size-fits-all paradigm.
Besides precision medicine, ITRs are also applicable to personalized advertising \cite{Wang2015, Sun2015} and recommender systems \cite{Schnabel2016, Lada2019}. 

Various methods have been proposed for ITRs.
Among them, OWL is a common framework for constructing ITRs by directly optimizing population-level expected outcomes.
\cite{Zhao2012} showed that estimating the optimal ITR is equivalent to a classification problem of treatment groups, where participants in different groups are weighted proportionally to their observed clinical benefits. 
OWL has also been extended to search for individualized continuous doses \cite{chen2016personalized}, derive ITRs for multiple treatment options \cite{zhou2018outcome}, and establish dynamic treatment regimes \cite{eguchi2022outcome}. 
\cite{Zhou2017} proposed residual weighted learning (RWL) to improve the finite sample performance of OWL by weighting individual participants with the residuals of their benefits obtained by a regression model and solving the optimization problem using a difference of convex algorithm.
Matched learning (M-learning) \cite{Wu2020} further improves the OWL-based framework by utilizing matching instead of inverse probability weighting to balance participants in different treatments, making the algorithm more applicable for the analysis of observational studies (e.g., large-scale electronic health records). 
We focus on OWL with formal privacy guarantees in this work and will explore incorporating privacy guarantees in other types of ITR frameworks in future work. 

Privacy literature has shown that ad-hoc anonymization of sensitive datasets, even if done carefully, can be insufficient and attackers can still leverage anonymized data to infer masked or removed values \cite{Narayanan2008, Sweeney2015, Ahn2015}.
Aggregated statistics are also vulnerable to privacy attacks \cite{Desfontain2021}. Even black-box access to trained ML models permits inferring membership in the training data \cite{Shokri2017} and approximating attribute values \cite{Zhao2021}. 
Differential privacy (DP) \cite{Dwork2006} is a state-of-the-art privacy notion that provides a mathematically rigorous framework for ensuring the privacy of sensitive datasets without making any assumptions on the would-be attacker's methods or auxiliary knowledge.
In the DP framework, carefully calibrated random noise is injected through a randomized mechanism into statistics or function outputs derived from sensitive data, lowering the probability of learning personal information used to generate the outputs.

\vspace{-6pt}\subsection{Related Work}\label{subsec:related_work}
\cite{Chaudhuri2008} analyzed the application of DP to releasing logistic regression coefficients in the framework of ERM.
\cite{chaudhuri2011} generalized DP logistic regression and developed DP counterparts to binary classification problems in a general unweighted ERM framework.
Subsequent works have incrementally improved DP-ERM analysis, such as by tightening bounds on excess risk or making the methods more computationally efficient \cite{Bassily2014, Kasiviswanathan2016}.
\cite{Kifer2012} extended the framework in \cite{chaudhuri2011} to the approximate DP framework with a broader set of regularizers. 
 
In work completed concurrently to, but independently from ours, \cite{Spicker2024} developed an alternative method for differentially private OWL.
Their method is an extension of a DP support vector machine (SVM) approach \cite{Rubenstein2012}, which is less general than the DP-ERM approach \cite{chaudhuri2011} in the sense that it is specific to the SVM framework and not directly generalizable to other ERM problems. 
Comparatively, our DP-wERM approach directly considers DP in the general wERM framework, of which weighted SVM is a special case, and is therefore more general in this way.
To our knowledge, no other privacy-preserving counterparts except for \cite{Spicker2024} have been developed for the models constructing ITRs.
Although some recent works estimate the individual treatment effects themselves in a DP-satisfying manner \cite{Betlei2021, Niu2022}, they focus on outcome predictions while our goal is to learn a treatment assignment rule with DP guarantees.

\vspace{-6pt} \subsection{Our Contributions} 
First, motivated by the need to train OWL models on sensitive data, we formulate a general wERM framework, of which OWL is a special case. wERM assigns distinct individual weights within the loss function, naturally generalizing unweighted ERM setups where all observations carry equal weight. To our knowledge, our procedure represents the first DP-wERM extension of the foundational unweighted DP-ERM frameworks in \cite{Chaudhuri2008, chaudhuri2011, Kifer2012}.

Second, we prove that our proposed wERM procedure satisfies DP under common regularity conditions, and we derive its excess empirical and population risks as well as the sample complexity for the latter case. We also provide an algorithm for hyperparameter tuning that can utilize either a portion of the sensitive training data or an independent proxy dataset. Our sensitivity analysis demonstrates the robustness of this tuning algorithm, even when the independent dataset deviates from the sensitive dataset in key statistical aspects.

Third, we apply  the proposed DP-wERM to OWL in experiments on synthetic and real data and show that the trained DP-OWL models can generate ITRs that are comparable to those produced by non-private OWL methods, with similar empirical treatment value estimates.

\vspace{-6pt}\section{Preliminaries}
\label{sec:preliminaries}

In this section, we formalize the notions of wERM, OWL, and DP.
By default, an unspecified norm is the $\ell_2$ norm (i.e., $\|\cdot\| = \|\cdot\|_2$) and boldfaced symbols are vector-valued.

\vspace{-6pt}\subsection{Weighted Empirical Risk Minimization (wERM)}
\label{subsec:wERM}
Let $(\mathbf{x}_i, y_i) \in (\mathcal{X},\mathcal{Y})$ represent the $p$ features in observation $i$ and the corresponding outcome, respectively, for $i=1,\ldots,n$.
Let $\ell: \mathcal{Y} \!\times\! \mathcal{Y} \rightarrow \mathbb{R}$ denote a non-negative loss function over the label space $\mathcal{Y}$.
The goal in ERM is to determine a predictor function $f$ that minimizes the loss function over the data. 
Linear regression, logistic regression, and SVM are all examples of ERM.
We assume $f$ is parameterized by $\boldsymbol{\theta}$. 
It is also common to employ a regularizer $R$ that penalizes the complexity of $f_{\boldsymbol{\theta}}$.
The solution to a regularized unweighted ERM is 
\begin{align}
\argmin_{f_{\boldsymbol{\theta}}\in\mathcal{F}}\frac{1}{n} \bigg\{\sum_{i=1}^n \ell(f_{\boldsymbol{\theta}}(\mathbf{x}_i), y_i) + \gamma R(\boldsymbol{\theta})\bigg\},\label{eqn:ERM}
\end{align}
where $\gamma>0$ is a tunable regularization constant and $\mathcal{F}$ is an appropriate predictor function space that governs specific instances of ERM.

In some applications, each of the $n$ loss terms $\ell(f_{\boldsymbol{\theta}}(\mathbf{x}_i), y_i)$ in Eqn.~\eqref{eqn:ERM} may be weighted differently, leading to weighted ERM (wERM).
\begin{defn}[Weighted empirical risk minimization (wERM)]
\label{def:wERM}
Define $(\mathbf{x}_i, y_i)$, $\ell$, $R$, $\gamma$, and $\mathcal{F}$ as above. 
Let $w_i$ denote the weight associated with individual loss $\ell_i(\boldsymbol{\theta}) = \ell(f_{\boldsymbol{\theta}}(\mathbf{x}_i), y_i)$.
A wERM problem is defined as
\begin{align}
\argmin_{f_{\boldsymbol{\theta}}\in\mathcal{F}}\frac{1}{n}\bigg\{\sum_{i=1}^n w_i\ell(f_{\boldsymbol{\theta}}(\mathbf{x}_i), y_i) + {\gamma}R(\boldsymbol{\theta})\bigg\}\label{eqn:wERM}
\end{align}
\end{defn}
\noindent An example of wERM is OWL, as introduced next.

\vspace{-6pt}\subsection{Outcome Weighted Learning (OWL)}
\label{subsec:OWL}
Consider a dataset $D=\{(\mathbf{x}_i, y_i, B_i)\,;\,i=1,\dots,n\}$ collected from either a randomized clinical trial or an observational study with a total of $n$ participants. $\mathbf{x}=(x_1,\dots,x_p)^\intercal\in\mathcal{X}\subseteq\mathbb{R}^p$ contains features used for ITR derivation.
Without loss of generality (WLOG), we examine a binary treatment $Y\in\mathcal{Y}=\{-1,1\}$, assigned at the beginning of the study with probability $P(Y|\mathbf{x})$.
$B$ is observed  benefit of the treatments.

In a randomized trial with treatment allocation ratio 1:1, $P(Y=1|\mathbf{x})\!=\!P(Y=-1|\mathbf{x})\!=\!0.5$. 
In an observational study, treatment assignment depends on the features $\mathbf{x}$, and $P(Y|\mathbf{x})$ can be estimated using methods such as logistic regression.
Note that the observed $Y$ for a subject does not necessarily represent the underlying optimal treatment assignment that is most beneficial for that subject in the ITR setting. 
In other words, 
the goal of ITR is not to estimate a treatment assignment function that predicts the treatment an individual actually receives, but rather to leverage the observed benefit $B$ to learn a function that will assign individuals to their optimal treatment assignment in the future.

OWL \cite{Zhao2012} is derived in a seminal work enabling the estimation of an optimal ITR $T^*$ that maximizes the expected clinical benefit $\mathbb{E}\left[\frac{B}{P(Y|\mathbf{x})}\mathbbm{1}(Y=T(\mathbf{x}))\right]$, where $\mathbbm{1}(\cdot)$ is the indicator function and $P(Y|\mathbf{x})$ is the propensity score -- the probability of receiving treatment $Y$ given $\mathbf{x}$.
WLOG, we assume that the larger the value of $B$, the greater the treatment benefit.
The maximization problem above is equivalent to  $T^*\in \argmin_{T\in\mathcal{T}} \mathbb{E}\left[\frac{B}{P(Y|\mathbf{x})}\mathbbm{1}(Y\neq T(\mathbf{x}))\right]$, where $\frac{B}{P(Y|\mathbf{x})}\mathbbm{1}(Y\neq T(\mathbf{x}))$ is a weighted classification error, implying that 
OWL is essentially weighted classification.
Let $D=\{(\mathbf{x}_i, y_i, B_i)\,;\,i=1,\dots,n\}$ denote a set of observations and let $T(\mathbf{x})=1$ if $f_{\boldsymbol{\theta}}(\mathbf{x})>0$ (for some predictor function $f_{\boldsymbol{\theta}}$ parameterized by $\boldsymbol{\theta}$) and $T(\mathbf{x})=-1$ otherwise.
Then the optimization problem becomes the ERM problem
\begin{equation*}
    \textstyle f^*\in \argmin_{f_{\boldsymbol{\theta}}\in\mathcal{F}} \frac{1}{n} \sum_{i=1}^n \left\{\frac{B_i}{P(y_i|\mathbf{x}_i)}\mathbbm{1}(y_i\neq \text{sign}(f_{\boldsymbol{\theta}}(\mathbf{x}_i)))\right\}.
\end{equation*}
Since the weighted 0-1 loss is non-convex and discontinuous, a common strategy is to replace it with a convex hinge loss \cite{chen2016personalized}. 
Together with an $\ell_2$ regularizer often used to penalize the complexity of  $f_{\boldsymbol{\theta}}$, the final OWL optimization problem is
\begin{align}
&\argmin_{f_{\boldsymbol{\theta}}\in\mathcal{F}}\frac{1}{n} \left\{\sum_{i=1}^n\!w_i\text{max}\big(0,1\!-\!y_i f_{\boldsymbol{\theta}}(\mathbf{x}_i)\big)+{\gamma}\norm{\boldsymbol{\theta}} \!\right\},\label{eqn:OWL}\\
&\mbox{where }w_i={B_i}/{P(y_i|\mathbf{x}_i)}. \label{eqn:w} 
\end{align}
It is straightforward to verify that Eqn.~\eqref{eqn:OWL} is a special case of wERM in Definition \ref{def:wERM} with 
$ \ell_i(\boldsymbol{\theta})=\text{max}(0,1-y_i f_{\boldsymbol{\theta}}(\mathbf{x}_i))$. 

\vspace{-9pt}\subsection{Differential Privacy (DP)}
\label{subsec:DP}
We use the DP framework to achieve privacy guarantees when releasing results from wERM.
The DP framework ensures that, regardless of the inclusion or exclusion of any single individual's data, the results of a DP-satisfying mechanism acting on a sensitive dataset are sufficiently similar.

\begin{defn}[Neighboring datasets \cite{Dwork2006}]
\label{def:neighbor} Two datasets $D, \tilde{D} \in \mathcal{D}$ are considered \textnormal{neighboring datasets}, denoted by  $d(D, \tilde{D})=1$, if $D$ can be obtained from $\tilde{D}$ by modifying a single individual's data.\footnote{either deletion/removal or substitution/replacement of an individual.}
\end{defn}
\begin{defn}[$(\epsilon, \delta)$-DP \cite{Dwork2006b}]
\label{def:DP} A randomized mechanism $\mathcal{M}$ satisfies \textnormal{$(\epsilon, \delta)$-DP} if for all $S \subset \textnormal{Range}(\mathcal{M})$ and $d(D, \tilde{D})=1$,
    \begin{equation}\label{eqn:DP}
        P(\mathcal{M}(D) \in S) \le e^\epsilon P(\mathcal{M}(\tilde{D}) \in S) + \delta,
    \end{equation}
    where $\epsilon>0$ and $\delta\in[0,1)$ are privacy loss or privacy budget parameters.
    When $\delta=0$, it becomes \textnormal{$\epsilon$-DP}.
\end{defn}
Per Eqn.~\eqref{eqn:DP}, the privacy of an individual in the dataset is protected since the output from the DP mechanism remains similar whether their data are included or not. $\epsilon$ and $\delta$ control the similarity of the output distributions on the neighboring datasets.  Smaller $\epsilon$ and $\delta$ imply more privacy.
Generally, $\epsilon\le1$ is considered to provide strong privacy guarantees, though it is not unusual to see larger values of $\epsilon$ used in practice.
$\delta$ can be interpreted  as the probability that $\epsilon$-DP fails.
The value $\delta$ is often set to be inversely proportional to a polynomial of sample size $n$.

DP possesses several properties that facilitate its implementation in practice.
The immunity to post-processing property, for example, ensures that the output of a DP mechanism cannot be further manipulated to weaken the DP guarantees.
Additionally, in  situations where multiple statistics or outputs are released from a sensitive dataset, the quantification of the overall privacy loss over the application of multiple DP mechanisms to the  data is known as privacy loss composition and has been well-studied \cite{Dwork2006b, mcsherry2007mechanism, Dwork2010, Dong2022}. 

A common way to achieve DP guarantees for information release is to add noise, appropriately calibrated to a desired output, to the output before release.
The scale of the added noise relates to the global sensitivity of the output.
Global sensitivity (GS) is originally defined using the $\ell_1$ norm by \cite{Dwork2006}.
In this paper, we use a more general definition.
\begin{defn}[$\ell_p$-GS \cite{Liu2019}]
    Let $\mathbf{s}$ be an output calculated from a dataset.
    The \textnormal{$\ell_p$-GS} of $\mathbf{s}$ is 
    \begin{equation}
        \Delta_{p,\mathbf{s}} = \max_{d(D,\tilde{D}) = 1}\big\|\mathbf{s}(D) - \mathbf{s}(\tilde{D})\big\|_p,
    \end{equation}
    where $\norm{\cdot}_p$ is the $\ell_p$ norm for $p>0$.
\end{defn}
In general, the larger the GS of an output, the larger the noise scale of a DP mechanism $\mathcal{M}$ is to achieve privacy guarantees at pre-specified $(\epsilon, \delta)$.
For example, the Laplace mechanism satisfies $\epsilon$-DP \cite{Dwork2006} by adding noise drawn from a Laplace distribution with mean 0 and scale $\Delta_{1, \mathbf{s}}/\epsilon$ to output $\mathbf{s}$.
As a trade-off, DP mechanisms generally incur a cost in data utility; and the   trade-off between privacy protection and data utility depends on $(\epsilon, \delta)$, among other factors such as sample size, the DP mechanism itself, etc.

\vspace{-3pt}\section{Differentially Private wERM} 
\label{sec:methods}
Prior to presenting the DP version for the wERM problem in Definition \ref{def:wERM}, we first define some regularity conditions in Assumption \ref{assump}.
Except for the newly introduced assumption that weight $w_i\in(0,W]\,\forall i$, the assumptions  are the same as for the classic unweighted DP-ERM \cite{chaudhuri2011}.
\vspace{-9pt}\subsection{Regularity Conditions}
\label{subsec:regularity_conditions}
\begin{assum}\label{assump}
The regularity conditions for DP-wERM are: 
\begin{enumerate}
\item[{A1.}] $\norm{\mathbf{x}_i}\le 1$ and $w_i\in(0,W]$ for all $i$; 
\item[{A2.}] The set of predictor functions are linear predictors\footnote{We focus on linear predictor function in this work. Nonlinear cases in unweighted  ERM problems (e.g., the radial kernel for SVM) can be approximated with linear predictor functions  (e.g., \cite{chaudhuri2011} approximates the radial kernel for SVM via random projections \cite{Rahimi2007, Rahimi2008} in the DP-ERM framework); the same applies to the wERM problems. 
A predicted label $\hat{y}$ for a given $\mathbf{x}$ can be obtained from the estimated $\hat{f}=f_{\hat{\boldsymbol{\theta}}}(\mathbf{x})= \mathbf{x}^\intercal\hat{\boldsymbol{\theta}}$, where $\hat{\boldsymbol{\theta}}$ is an optimal solution to the wERM problem, such as sign($\hat{f}$) for $y\in\{1,-1\}$.} $\mathcal{F} = \{f_{\boldsymbol{\theta}}\,:\,f_{\boldsymbol{\theta}}(\mathbf{x}) = \mathbf{x}^\intercal\boldsymbol{\theta}, \boldsymbol{\theta} \in \mathbb{R}^p\}$; 
        \item[{A3.}] Loss function $\ell$ is convex and everywhere first-order differentiable;
        \item[{A4.}] For any observed label $y\in\{-1,1\}$ and  predicted $\hat{f} = f_{\hat{\boldsymbol{\theta}}}(\mathbf{x})$, 
        the loss $\ell(\hat{f}, y)$ must be expressible as a function of the product $z=y\hat{f}$; in other words, $\ell(\hat{f}, y) = \tilde{\ell}(z)$ for some function $\tilde{\ell}$; additionally, $|\tilde{\ell}^\prime(z)| \le 1\;\forall\; z$;
        \item[{A5.}] The regularizer $R$ is $1$-strongly 
        convex\footnote{Technically, $R$ could be $\Lambda$-strongly convex. Let $g: \mathbb{R}^p \rightarrow \mathbb{R}$. If, for all $\alpha \in (0, 1)$ and $\mathbf{x}, \mathbf{y}\in\mathbb{R}^p$,
$ \textstyle g(\alpha\mathbf{x} + (1-\alpha)\mathbf{y})\!\le \alpha g(\mathbf{x}) + (1-\alpha)g(\mathbf{y}) - \frac{1}{2}\Lambda\alpha(1-\alpha)\norm{\mathbf{x}\!-\!\mathbf{y}}^2\!,$
then $g$ is \textnormal{$\Lambda$-strongly convex} for $\Lambda>0$. This distinction between $\Lambda$ vs. 1-strong convexity does not impact our results; we assume 1-strong convexity WLOG to simplify the proof of Theorem \ref{thm:wERM_sens}.} and everywhere first-order differentiable.
\end{enumerate}
\end{assum}
Some of the conditions (e.g., $\norm{\mathbf{x}_i}\le1$) can be satisfied by minor data processing.
While we focus on predictor functions $f_{\hat{\boldsymbol{\theta}}}(\mathbf{x})$ without an explicit bias (intercept) term, an implicit bias term can still be used by augmenting $\mathbf{x}$ with a constant term.
A3 to A5 can be achieved by weighted versions of common classification methods with minor modifications (if any).
For example, the binary cross-entropy loss in logistic regression automatically satisfies A3 and A4;
the hinge loss function commonly used for SVM can be easily smoothed to an approximation that also satisfies A3 and A4; 
the popular $\ell_2$-norm regularizer satisfies A5.

\vspace{-6pt}\subsection{DP-wERM Algorithm} \label{subsec:DP-wERM}
Under Assumption~\ref{assump}, we adapt the DP-ERM output perturbation framework from \cite{Chaudhuri2008, chaudhuri2011} to implement the DP-wERM procedure with $\epsilon$-DP guarantees and list the steps in Algorithm~\ref{alg:DP-wERM}.  The output predictor $f^*$ in Line 8 satisfies $\epsilon$-DP by sanitizing the parameter estimate $\hat{\boldsymbol{\theta}}$ with DP guarantees (specifically, the DP noise generated in Lines 5 to 7 is equivalent to drawing a random sample from the spherical Laplace distribution $f(\mathbf{z}^*)\propto \exp(-\frac{\epsilon}{\Delta}\norm{\mathbf{z}^*})$ \cite{Chaudhuri2008}).

Compared to the unweighted DP-ERM algorithm in \cite{Chaudhuri2008, chaudhuri2011}, Algorithm~\ref{alg:DP-wERM} shares similar steps, but differs in key technical details. Specifically, the objective function in Line 3 is updated to Eqn.~\eqref{eqn:wERM},  and the $\ell_2$-GS of the estimator,  $(C+2W)/\gamma$,  in Line 4 is substantially different  from that of the unweighted ERM (see Theorem \ref{thm:wERM_sens}).
Subsequently, the weighting in wERM alters the theoretical utility guarantees, which is carefully analyzed in Section \ref{subsec:utility_analysis}.

\begin{algorithm}[!htb]
\caption{The DP-wERM procedure}\label{alg:DP-wERM}
\begin{algorithmic}[1]
    \STATE \textbf{Input}: Privacy budget $\epsilon>0$; dataset $D = (\mathbf{x}_i, y_i, w_i) \in \mathbb{R}^p \!\times\! \{-1, 1\} \!\times\! \mathbb{R}$ of feature-label-weight triples with $\norm{\mathbf{x}_i}\le 1$ and weights $w_i\in(0,W]$ for all $i$; constant $C$ (see Theorem \ref{thm:wERM_sens});  $f_{\boldsymbol{\theta}}(\mathbf{x}) = \mathbf{x}^\intercal\boldsymbol{\theta}$; loss $\ell$;   regularizer $R$ with regularization constant $\gamma>0$. 
    \STATE \textbf{Output}: Privacy-preserving predictor $f^*$.
    \STATE Set $\hat{\boldsymbol{\theta}} \gets \argmin_{\boldsymbol{\theta} \in \mathbb{R}^p} \frac{1}{n}\{\sum_{i=1}^n w_i\ell(f_{\boldsymbol{\theta}}(\mathbf{x}_i), y_i) + \gamma R(\boldsymbol{\theta})\}$ \COMMENT{\texttt{\small{wERM without DP}}}
    \STATE Set $\Delta \gets (C+2W)/\gamma$ \COMMENT{\texttt{\small{$\ell_2$-GS of $\hat{\boldsymbol{\theta}}$}}}
    \STATE Sample $\zeta$ from Gamma(shape $=p$, rate$=\epsilon/\Delta$)
    \STATE Sample $\mathbf{z}$ from $\mathcal{N}(\mathbf{0},I_p)$
    \STATE Set $\hat{\boldsymbol{\theta}}^* \gets \hat{\boldsymbol{\theta}} + \zeta\frac{\mathbf{z}}{\norm{\mathbf{z}}}$ \COMMENT{\texttt{\small{$\epsilon$-DP mechanism}}}
    \STATE \textbf{return} $f^*=\mathbf{x}^\intercal\hat{\boldsymbol{\theta}}^*$ for  any $\mathbf{x}\in \mathbb{R}^p$ with $\norm{\mathbf{x}}\le 1$
\end{algorithmic}
\end{algorithm}

The dimension $p$  of $\boldsymbol{\theta}$ often increases with the number features in $\mathbf{x}$.
The larger $p$ is, the larger the scale of DP noise injected into $\hat{\boldsymbol{\theta}}$ must be for a pre-specified privacy budget in Algorithm \ref{alg:DP-wERM}, leading to noisier $\hat{\boldsymbol{\theta}}^*$ and $f^*$.
It is generally good practice to perform  feature engineering (e.g.~variable selection or dimensionality reduction) to reduce $p$. 
If this step is guided by prior or domain knowledge that is independent of the private data $D$, then it incurs no additional privacy cost. Otherwise, it should either be performed using a DP procedure on $D$ with a small portion of the overall privacy budget or divide $D$ into two non-overlapping portions and use one portion for feature engineering without incurring additional privacy loss \cite{Kifer2012, Thakurta2013, Steinke2017, Chaudhuri2013}. 
This step may be treated as an internal pre-processing step; its output needs not to be released along with $f^*$ from Algorithm~\ref{alg:DP-wERM} although  the data curator choose to disclose it to make the procedure less of a ``black box''.

Algorithm~\ref{alg:DP-wERM} involves a hyperparameter $\gamma$. While non-DP hyperparameters are typically tuned via cross-validation on the training data, this approach cannot be directly applied under DP without incurring additional privacy loss. The DP literature often treats hyperparameter selection as a secondary concern, implicitly assuming they  are known \textit{a priori} or derived from independent data sources \cite{chaudhuri2011, Kifer2012, abadi2016deep}.\footnote{\cite{chaudhuri2011} provide a DP hyperparameter tuning method but do not account for its privacy loss against the sensitive dataset, effectively assuming an independent proxy dataset is available; \cite{Kifer2012} does not address hyperparameter selection; and \cite{abadi2016deep} omits discussion of how deep neural network hyperparameters are selected with DP guarantees.} When prior knowledge is unavailable, the sensitive dataset itself must be utilized by allocating either (i) a portion of the total privacy budget or (ii) a portion of the dataset for tuning. The budget-allocation approach permits using the entire dataset for both tuning and training under smaller individual budgets, whereas the data-splitting approach maintains the target privacy budget but trains the model on a smaller data subset. The hyperparameter tuning method in \cite{chaudhuri2011} follows this latter data-splitting approach. 
Algorithm~2 in the supplemental materials (SM) describes an $(n, \epsilon)$-adaptive approach to tuning $\gamma$ in Algorithm~\ref{alg:DP-wERM} using either a portion of the sensitive dataset or an independent dataset (the latter is adopted in the experiments in Section \ref{sec:sim_studies}). 
A more in-depth discussion of our approach to hyperparameter tuning, including when it is realistic to use an independent dataset for hyperparameter tuning, can be found in SM Section \ref{subsec:hyperparameter_tuning}.

\vspace{-6pt}\subsection{DP guarantees of  DP-wERM algorithm}
\label{sec:DP_guarantees}
The proof that Algorithm~\ref{alg:DP-wERM} satisfies $\epsilon$-DP depends on strong convexity. It is straightforward to show that: (i) if a function $g$ is $\Lambda$-strongly convex, then $ag$ is $a\Lambda$-strongly convex for any $a\in\mathbb{R}$; (ii) if $f$ is convex and $g$ is $\Lambda$-strongly convex, then $f+g$ is $\Lambda$-strongly convex. Based on these, we re-state a lemma originally presented in \cite{chaudhuri2011} for the sake of self-containment; readers may refer to its proof in \cite{chaudhuri2011}.
\begin{lem}[\cite{chaudhuri2011}]
    \label{lem:Chaudhuri_lemma}
    Let $h_1: \mathbb{R}^p \rightarrow \mathbb{R}$ and $ h_2: \mathbb{R}^p \rightarrow \mathbb{R}$ be everywhere first-order differentiable functions.
    Assume $h_1(\boldsymbol{\theta})$ and $h_1(\boldsymbol{\theta}) + h_2(\boldsymbol{\theta})$ are both $\Lambda$-strongly convex.
    If $\hat{\boldsymbol{\theta}}_1 = \argmin_{\boldsymbol{\theta}}h_1(\boldsymbol{\theta})$ and $\hat{\boldsymbol{\theta}}_2 = \argmin_{\boldsymbol{\theta}}\{h_1(\boldsymbol{\theta}) + h_2(\boldsymbol{\theta})\}$, then
    \begin{equation}
        \textstyle\norm{\hat{\boldsymbol{\theta}}_1 - \hat{\boldsymbol{\theta}}_2} \le \frac{1}{\Lambda}\max_{\boldsymbol{\theta}}\norm{\nabla h_2(\boldsymbol{\theta})}.
    \end{equation}
\end{lem}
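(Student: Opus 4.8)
The plan is to exploit the first-order optimality conditions at the two minimizers together with the quadratic lower bound supplied by $\Lambda$-strong convexity. Since $h_1$ and $h_1+h_2$ are everywhere differentiable and $\Lambda$-strongly convex (hence coercive, so the minimizers $\hat{\boldsymbol{\theta}}_1,\hat{\boldsymbol{\theta}}_2$ exist and are unique), stationarity gives $\nabla h_1(\hat{\boldsymbol{\theta}}_1)=\mathbf{0}$ and $\nabla h_1(\hat{\boldsymbol{\theta}}_2)+\nabla h_2(\hat{\boldsymbol{\theta}}_2)=\mathbf{0}$. The first sub-step I would carry out is to convert the chord definition of $\Lambda$-strong convexity into its first-order form: writing the convex combination as $\mathbf{x}+t(\mathbf{y}-\mathbf{x})$ with $t=1-\alpha$, dividing by $t$, and letting $t\to 0^+$ yields, for any differentiable $\Lambda$-strongly convex $g$, the bound $g(\mathbf{y})\ge g(\mathbf{x})+\langle\nabla g(\mathbf{x}),\mathbf{y}-\mathbf{x}\rangle+\tfrac{\Lambda}{2}\norm{\mathbf{y}-\mathbf{x}}^2$.

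Next I would apply this first-order bound twice. Using it for $h_1$ at the pair $(\mathbf{x},\mathbf{y})=(\hat{\boldsymbol{\theta}}_1,\hat{\boldsymbol{\theta}}_2)$ and invoking $\nabla h_1(\hat{\boldsymbol{\theta}}_1)=\mathbf{0}$ gives $h_1(\hat{\boldsymbol{\theta}}_2)-h_1(\hat{\boldsymbol{\theta}}_1)\ge\tfrac{\Lambda}{2}\norm{\hat{\boldsymbol{\theta}}_1-\hat{\boldsymbol{\theta}}_2}^2$; using it for $h_1+h_2$ at the reversed pair and invoking $\nabla(h_1+h_2)(\hat{\boldsymbol{\theta}}_2)=\mathbf{0}$ gives $(h_1+h_2)(\hat{\boldsymbol{\theta}}_1)-(h_1+h_2)(\hat{\boldsymbol{\theta}}_2)\ge\tfrac{\Lambda}{2}\norm{\hat{\boldsymbol{\theta}}_1-\hat{\boldsymbol{\theta}}_2}^2$. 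Adding these two inequalities cancels every $h_1$ term and leaves $h_2(\hat{\boldsymbol{\theta}}_1)-h_2(\hat{\boldsymbol{\theta}}_2)\ge\Lambda\norm{\hat{\boldsymbol{\theta}}_1-\hat{\boldsymbol{\theta}}_2}^2$. Finally I would bound the left-hand side from above: the mean value theorem (applied to $t\mapsto h_2(\hat{\boldsymbol{\theta}}_2+t(\hat{\boldsymbol{\theta}}_1-\hat{\boldsymbol{\theta}}_2))$) supplies a point $\boldsymbol{\xi}$ on the segment joining the minimizers with $h_2(\hat{\boldsymbol{\theta}}_1)-h_2(\hat{\boldsymbol{\theta}}_2)=\langle\nabla h_2(\boldsymbol{\xi}),\hat{\boldsymbol{\theta}}_1-\hat{\boldsymbol{\theta}}_2\rangle$, and Cauchy--Schwarz bounds this by $\norm{\nabla h_2(\boldsymbol{\xi})}\norm{\hat{\boldsymbol{\theta}}_1-\hat{\boldsymbol{\theta}}_2}\le \max_{\boldsymbol{\theta}}\norm{\nabla h_2(\boldsymbol{\theta})}\,\norm{\hat{\boldsymbol{\theta}}_1-\hat{\boldsymbol{\theta}}_2}$. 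Chaining with the previous lower bound and dividing by $\Lambda\norm{\hat{\boldsymbol{\theta}}_1-\hat{\boldsymbol{\theta}}_2}$ yields the claim.

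I expect the main obstacle to be bookkeeping rather than any deep difficulty. The one genuinely delicate point is the degenerate case $\hat{\boldsymbol{\theta}}_1=\hat{\boldsymbol{\theta}}_2$, where the final division is illegal but the inequality holds trivially, so it must be dispatched separately. A secondary technical care is justifying the chord-to-gradient conversion, which needs only differentiability (the limit of the difference quotient along the segment direction), and confirming that strong convexity guarantees existence and uniqueness of the two $\argmin$'s so that the stationarity conditions are available. An equivalent and slightly shorter route I would keep in reserve avoids the mean value theorem entirely: substituting $\nabla h_1(\hat{\boldsymbol{\theta}}_2)=-\nabla h_2(\hat{\boldsymbol{\theta}}_2)$ into the strong-monotonicity inequality $\langle\nabla h_1(\hat{\boldsymbol{\theta}}_2)-\nabla h_1(\hat{\boldsymbol{\theta}}_1),\hat{\boldsymbol{\theta}}_2-\hat{\boldsymbol{\theta}}_1\rangle\ge\Lambda\norm{\hat{\boldsymbol{\theta}}_1-\hat{\boldsymbol{\theta}}_2}^2$ and applying Cauchy--Schwarz directly gives $\Lambda\norm{\hat{\boldsymbol{\theta}}_1-\hat{\boldsymbol{\theta}}_2}\le\norm{\nabla h_2(\hat{\boldsymbol{\theta}}_2)}\le\max_{\boldsymbol{\theta}}\norm{\nabla h_2(\boldsymbol{\theta})}$, which even sharpens the intermediate bound to the gradient evaluated at $\hat{\boldsymbol{\theta}}_2$.
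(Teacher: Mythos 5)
The paper does not actually prove this lemma --- it is restated verbatim from \citet{chaudhuri2011} and the reader is referred to the original paper (their Lemma 7) for the proof. Your argument is nonetheless correct and complete: the chord-to-gradient conversion is valid under the stated differentiability, the two first-order strong-convexity inequalities anchored at the stationary points $\hat{\boldsymbol{\theta}}_1$ and $\hat{\boldsymbol{\theta}}_2$ sum to $h_2(\hat{\boldsymbol{\theta}}_1)-h_2(\hat{\boldsymbol{\theta}}_2)\ge\Lambda\norm{\hat{\boldsymbol{\theta}}_1-\hat{\boldsymbol{\theta}}_2}^2$, and the mean-value-theorem/Cauchy--Schwarz step closes the bound, with the degenerate case $\hat{\boldsymbol{\theta}}_1=\hat{\boldsymbol{\theta}}_2$ correctly set aside. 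It is worth noting that your ``reserve'' route is in fact essentially the proof given in \citet{chaudhuri2011}: they substitute $\nabla h_1(\hat{\boldsymbol{\theta}}_2)=-\nabla h_2(\hat{\boldsymbol{\theta}}_2)$ into the strong-monotonicity inequality for $\nabla h_1$ and apply Cauchy--Schwarz, obtaining the sharper intermediate bound $\Lambda\norm{\hat{\boldsymbol{\theta}}_1-\hat{\boldsymbol{\theta}}_2}\le\norm{\nabla h_2(\hat{\boldsymbol{\theta}}_2)}$ and using only the strong convexity of $h_1$. Your primary route trades that for symmetric use of both strong-convexity hypotheses plus the mean value theorem applied to $h_2$; both yield the same constant, and the only cosmetic improvement you might make is to write $\sup_{\boldsymbol{\theta}}$ rather than $\max_{\boldsymbol{\theta}}$, since attainment of the maximum is neither guaranteed nor needed.
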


Given Assumption~\ref{assump} and Lemma \ref{lem:Chaudhuri_lemma}, we are now equipped to derive the $\ell_2$-GS of $\hat{\boldsymbol{\theta}}$ in the wERM optimization and establish the main result (Theorem \ref{thm:wERM_sens}).
Compared to ERM  \cite{chaudhuri2011}, the establishment of DP guarantees for wERM is more complex as the weight $w_i$ in Eqn.~\eqref{eqn:w} affects the GS of the solution to the wERM problem, as presented in the proof of Theorem \ref{thm:wERM_sens} (see Section VI of the SM). 
\begin{thm}[Main result]\label{thm:wERM_sens}
Let $\mathcal{L}$ denote a wERM problem given data $D$ in Eqn.~\eqref{eqn:wERM} satisfying the regularity conditions in Assumption \ref{assump}. Then 

(i) the $\ell_2$-GS of estimated parameters $\hat{\boldsymbol{\theta}}$ in the predictor function $f_{\boldsymbol{\theta}}$ is $\Delta_{2, \boldsymbol{\theta}} = (C+2W)/\gamma$, where constant $C=$
\begin{align}\label{eqn:C}
\begin{cases}
    0 & \mbox{if $\mathbf{w}$ is observed or predicted via an }\\[-3pt]
    & \mbox{externally trained model;}\\
    H/n
    & \mbox{if $\mathbf{w}$ is predicted via a model trained}\\[-3pt]
    & \mbox{on $D$ with a large $n$; constant $H$ is}\\[-3pt]
    &\mbox{defined in Eqn.~\eqref{eqn:H} below;}\\
    nW & \mbox{otherwise.}
\end{cases}
\end{align}\vspace{-6pt}
\begin{equation}\label{eqn:H}
H :=\nabla_{\bs{\beta}} w_i(\bs{\beta}_0)^\top\! G^{-1}\!\left(\!\psi(\tilde{d}_*,\bs{\beta}_0)\!-\! \psi(d_*,\bs{\beta}_0)\!\right),
\end{equation} 
where $\bs\beta_0$ contains the true parameters of the model used to estimate $P(Y|\mathbf{x})$, $\sum_{j=i}^n \psi(d_i,\bs\beta)\!=\!0$ is the estimating equation, $G\!=\!\mathbb E\left[\frac{\partial}{\partial\bs{\beta}}\psi(d,\bs{\beta}_0)\right]$ ($d$ is an arbitrary data point),  and $(d_*,\tilde{d}_*)$ is the  observation pair that differ between two neighboring datasets (* can be any $d_i$ for $i\in\{1,\ldots,n\}$).

(ii) Algorithm~\ref{alg:DP-wERM} satisfies $\epsilon$-DP.
\end{thm}
Among the three options for $C$ in Eqn.~\eqref{eqn:C}, the  bound of $nW$ is   the most conservative and unlikely to be actived in practice. For the second scenario, $H/n\rightarrow 0$ as $n\rightarrow\infty$. Therefore, when $n$ is sufficiently large, $H/n <\!\!< 2W$,  effectively equivalent to setting $C=0$, which is the first option. If one nevertheless wishes to use $H/n$ as is, one may estimate $H$ from $D$, which would involves a DP procedure with an additional privacy loss, or an data-independent upper bound can be used. 
Specifically, assume 
\begin{align}\label{eqn:Hbound}
&\|\nabla_{\bs\beta} w_i(\bs{\beta}_0)\|_2\le C_w,\,
\|G^{-1}\|_{\mathrm{op}}\le C_G, \notag\\
\mbox{and } & \|\psi(d,\bs{\beta}_0)\|_2\le C_\psi \mbox{ for all $d$}, \notag\\ 
\mbox{then }&H \le 2C_w C_G C_\psi.
\end{align}
The bounds $C_w,C_G$, and $C_\psi$ are problem- and model- specific given that they depend on the estimating function $\psi$. For logistic regression, a commonly used model for obtaining propensity scores,  the result on $H$ is stated in Proposition \ref{pro:logistic}; the detailed deviation is provided in Section VI of the SM. 
\begin{pro}[Instantiation of Eqn.~\eqref{eqn:Hbound} in Logistic regression]\label{pro:logistic}
In logistic regression with features $\mathbf{x}$, value $B$, and label $Y\in\{-1,1\}$, define
$p_{\bs\beta}(\mathbf{x}):=(1+e^{-\mathbf{x}^\top \bs\beta})^{-1}$, then
\begin{align*}
\psi(d,\bs\beta) &=\mathbf{x}\bigl((Y+1)/2-p_{\bs\beta}(\mathbf{x})\bigr);\,
\|\psi(d,\bs\beta)\|\le\|\mathbf{x}\|.\\
\partial_{\bs\beta} \psi(d,\bs\beta) & = -p_{\bs\beta}(\mathbf{x})\bigl(1-p_{\bs\beta}(\mathbf{x})\bigr)\mathbf{x}\mathbf{x}^\top,\\
G&=-\mathbb{E}\!\left[p_{\bs\beta_0}(\mathbf{x})\bigl(1-p_{\bs\beta_0}(\mathbf{x})\bigr)\mathbf{x}\mathbf{x}^\top\right].
\end{align*}
Assume $B\in[0,C_B]$, $\|\mathbf{x}\|\le C_x$, 
$\mathbb{E}[\mathbf{x}\mathbf{x}^\top]\succeq b_x I$ for some  $b_x>0$, $|\mathbf{x}^\top \bs\beta_0|\le M\,\forall\,\mathbf{x}$ thus $p_{\bs{\beta}_0}(\mathbf{x})\ge 1/(1+e^M)$ (positivity). 
Then $C_\psi = C_x, C_w= C_B C_xe^M, C_G= (1+e^M)^2/(b_xe^M)$ and 
\begin{align}
H = 2C_\psi C_w C_G &= 2C_B C^2_x(1+e^M)^2/b_x.\label{eqn:H12}
\end{align}
\end{pro}
In practice, $C_B$ is typically known \textit{a priori} or fixed per domain knowledge; $C_x$ is pre-specified during data pre-processing; similarly, $M$ from lower-bounding $P(Y=1|\mathbf{x})$ at a constant $c>0$, leading to $e^M=c^{-1}-1$. If we use $C_x\!=\!1$ per Assumption 5.A1, Eqn.~\eqref{eqn:H12} simplifies to $2C_B(1+e^M)^2/b_x$.

\vspace{-3pt}\subsection{Application of DP-wERM to OWL} \label{subsec:DP-OWL}
Algorithm \ref{alg:DP-wERM} can be applied to train a privacy-preserving OWL model given that OWL is a special case of wERM, achieving DP guarantees by operating on the primal weighted SVM problem.
This contrasts with the typical approach of optimizing the dual problem leveraging the popular ``kernel trick'' for a nonlinear basis (e.g., Gaussian or radial kernels and polynomial kernels). This is because the dual formulation presents privacy vulnerabilities by exposing individual training samples that serve as support vectors. Nevertheless, alternative strategies exist; for instance, \cite{Rubenstein2012} solves the dual problem directly with DP, bypassing the typical privacy limitations of the kernel trick.

Since the hinge loss function in Eqn.~\eqref{eqn:OWL} is not smooth, we approximate it with the Huber loss \cite{Chapelle2007} so that A3 and A4 in Assumption \ref{assump} are satisfied.  Given $h>0$,  the \textnormal{Huber loss} $\ell_{\textnormal{H}}: \mathbb{R} \rightarrow \mathbb{R}$ is 
    \begin{equation}\label{eqn:huber}
        \ell_{\textnormal{H}}(z) = \begin{cases} 
        0, & \textnormal{if } z > 1+h \\
        \frac{1}{4h}(1+h-z)^2, & \textnormal{if } |1 - z| \le h \\
        1 - z, & \textnormal{if } z < 1 - h \\
        \end{cases},
    \end{equation}
where $z=y\hat{f}$ in the wERM formulation. $\ell_{\textnormal{H}}$ is convex and everywhere first-order differentiable and $|\ell^\prime_{\textnormal{H}}(z)| \le 1$. 

Corollary \ref{cor:DP-OWL} states Algorithm \ref{alg:DP-wERM} satisfies $\epsilon$-DP for OWL with the Huber loss.
The result is a direct extension of Theorem \ref{thm:wERM_sens} as all regularity conditions in Assumption \ref{assump} are satisfied. 
\begin{cor}[\textbf{DP guarantees of OWL via the DP-wERM algorithm}]\label{cor:DP-OWL}
Consider the  OWL problem
\begin{equation}\label{eqn:approx_OWL}
    \argmin_{f_{\boldsymbol{\theta}}\in\mathcal{F}} \frac{1}{n}\left\{\sum_{i=1}^n\frac{B_i}{P(y_i|\mathbf{x}_i)} \ell_{\textnormal{H}}(y_if_{\boldsymbol{\theta}}(\mathbf{x}_i)) + \gamma\norm{\boldsymbol{\theta}}\right\},
\end{equation}
where $\norm{\mathbf{x}_i}\!\le\!1$, $\mathcal{F}\!= \!\{f_{\boldsymbol{\theta}}:f_{\boldsymbol{\theta}}(\mathbf{x}) \!=\!\mathbf{x}^\intercal\boldsymbol{\theta}, \boldsymbol{\theta} \in \mathbb{R}^p\}$, and $\frac{B_i}{P(y_i|\mathbf{x}_i)}=w_i\in(0,W]$.
The $\ell_2$-GS of estimate $\hat{\boldsymbol{\theta}}$ is $(C+2W)/\gamma$, where $C\ge 0$ is a constant as defined in Theorem \ref{thm:wERM_sens}, and solving the OWL problem via Algorithm \ref{alg:DP-wERM} satisfies $\epsilon$-DP.
\end{cor} \vspace{-3pt}
Eqn.~\eqref{eqn:C} in Theorem \ref{thm:wERM_sens} remains applicable for determining $C$ in Corollary \ref{cor:DP-OWL}. But since $B_i$ is often observed in OWL, $C$ is mainly driven by how $P(y_i|\mathbf{x}_i)$ is obtained.

\vspace{-9pt}\subsection{Theoretical Utility Analysis}\label{subsec:utility_analysis}
We derive theoretical upper bounds on empirical and population excess risks for the loss evaluated at DP estimate $\bs\theta^*$ via DP-wERM Algorithm \ref{alg:DP-wERM}. The proofs are provided in SM Sections VIII and IX.
Define the empirical and population risks corresponding to the unregularized hinge loss $\hinge(z)=(1-z)_+$ and Huber Loss $\huber$ in Eqn.~\eqref{eqn:huber} by \vspace{-3pt}
\begin{align}
\!\!\!\textstyle \whL_0(\bs{\theta})\!=\!\frac1n\sum_{i=1}^n w_i\hinge(z_i(\bs{\theta})),\;
&L_0(\bs{\theta})\!=\!\E\bigl[w\hinge(A\x^\top\bs{\theta})\bigr],\notag\\
\!\!\!\textstyle \whL_{\textnormal{H}}(\bs{\theta})\!=\!\frac1n\sum_{i=1}^n w_i\huber(z_i(\bs{\theta})),\;
& L_{\textnormal{H}}(\bs{\theta}) \!=\!\E\bigl[w\huber(A\x^\top\bs{\theta})\bigr],\notag
\end{align}
respectively. 

Denote the empirical wERM Huber loss and its minimizers by
$\whF_{\textnormal{H}}(\bs{\theta})
=\whL_{\textnormal{H}}(\bs{\theta})+\frac{\gamma}{n}\norm{\bs{\theta}}^2$  
$\widehat{\bs{\theta}}\in \arg\min_{\bs{\theta}\in\R^p}\whF_{\textnormal{H}}(\bs{\theta})$; and the DP minimizer via  Algorithm~\ref{alg:DP-wERM} is $\hat{\bs{\theta}}^*$.
\begin{thm}[Empirical excess risk bound]\label{thm:empirical-main}
Define $ a_p(\rho)\triangleq p+\sqrt{2p\log(1/\rho)}+\log(1/\rho)$. Fix any $\bs{\theta}_0\in\R^p$. With probability at least $1-\rho$ over the distribution of DP  noise,\vspace{-3pt}
\begin{align}
&\whL_0(\hat{\bs{\theta}}^*)\!-\!\whL_0(\bs{\theta}_0)\!
\le
\frac{\lambda}{2}\norm{\bs{\theta}_0}^2
\!+\!\mathscr{E}(n,\lambda,h,\epsilon,\rho)
\!+\!\frac{Wh}{4} \mbox{ with}\!\!\notag\\
&\lambda=\frac{2\gamma}{n};\; 
\mathcal{E}(n,\lambda,h,\epsilon,\rho)
=\frac{\lambda+W/(2h)}{2}\left(\!\frac{(C\!+\!2W)a_p(\rho)}{n\lambda\epsilon}\right)^{\!2}\!.\notag
\end{align}
\end{thm}
Theorem \ref{thm:empirical-main} shows that the empirical error bound  at  $\hat{\bs{\theta}}^*$ is decomposed into three terms: $T_1= \lambda\norm{\bs{\theta}_0}^2/2$ incurred by the $\ell_2$ regularization,  $T_2=\mathcal{E}(n,\lambda,h,\epsilon,\rho)$ due to DP perturbation, and $T_3=Wh/4$ the Huber-to-hinge approximation error. 

\begin{thm}[Population excess risk bound]
\label{thm:population-generic}
Fix $\bs{\theta}_0\in\R^p$. Suppose, with probability at least $1-\delta_0$ over the sample distribution and the DP noise distribution,
$\big|L_{\textnormal{H}}(\bs{\theta})-\whL_{\textnormal{H}}(\bs{\theta})\big| \le \mathcal G_{n,h}(\delta_0)$ 
for $\bs{\theta}\in\{\bs{\theta}_0,\hat{\bs{\theta}}^*\}$. Then, with probability at least $1-\delta_0-\rho$ over the sample and privacy noise distribution,
\vspace{-3pt}
\begin{align*}
L_0(\hat{\bs{\theta}}^*\!)\!-\!L_0(\bs{\theta}_0)
\le\!
\frac{\lambda}{2}\norm{\bs{\theta}_0}^2
\!\!+\!\mathscr{E}(n,\lambda,h,\epsilon,\rho)
\!+\!\frac{W\!h}{4}
\!+\!2\mathcal G_{n,h}(\delta_0).
\end{align*}
\end{thm}
Compared to Theorem~\ref{thm:empirical-main}, the population excess risk has an extra term $T_4=2\mathcal G_{n,h}(\delta_0)$,  the generalization error taking on a generic form.
Corollary \ref{cor:sc} provides an instantiation with a specific form for $G_{n,h}(\delta_0)$, along with the corresponding sample complexity, in a similar manner as in \cite{Chaudhuri2008}.
\begin{cor}[An instantiation of Theorem \ref{thm:population-generic}]\label{cor:sc}
Under the same strongly convex regularized-ERM generalization conditions in \cite{Chaudhuri2008, chaudhuri2011} \footnote{Assumption \ref{assump}, plus i.i.d. on $d_i$, $|\ell''_{\textnormal{H}}(z)|\le (2h)^{-1}$, and $\norm{\bs{\theta}_0}\le\infty$.}, with high probability\vspace{-3pt}
\begin{align*} 
L_0(\hat{\bs{\theta}}^*)\!-\!L_0(\bs{\theta}_0) 
\lesssim &\,
\frac{\lambda}{2}\norm{\bs{\theta}_0}^2
\!+\!\frac{(\lambda+\frac{W}{2h})(C\!+\!2W)^2p^2\log^2(\frac{p}{\delta})}{\lambda^2 n^2\epsilon^2}\\
&+\frac{Wh}{4} +\frac{W^2}{\lambda n}.
\end{align*}
For a given target excess risk $\alpha>0$, choose
$\lambda\asymp \frac{\alpha}{\norm{\bs{\theta}_0}^2}$
and  $h\le c\frac{\alpha}{W}$ 
for a sufficiently small constant $c>0$, then $L_0(\hat{\bs{\theta}}^*) \le L_0(\bs{\theta}_0)+\alpha$ with high probability if
\begin{align*}
 n\gtrsim
\max\Bigg\{&
\frac{W^2\norm{\bs{\theta}_0}^2}{\alpha^2},
\frac{(C+2W) p\log(p/\delta)\norm{\bs{\theta}_0}}{\epsilon\alpha},\\
&\frac{(C+2W) p\log(p/\delta)\sqrt{W/h}\,\norm{\bs{\theta}_0}^2}{\epsilon\alpha^{3/2}}
\Bigg\}.
\end{align*}
\end{cor}
The results in Theorems \ref{thm:empirical-main}, \ref{thm:population-generic} and Corollary \ref{cor:sc} also confirm our empirical findings that the same $\gamma=n\lambda/2$ not only has different regularization strength $T_1$ at different $n$ but also affects the DP error term $T_2$: smaller $\gamma$ decreases $T_1$ but increases $T_2$. In addition, $h$ affects $T_2$ and $T_3$ (but not $T_1$): smaller $h$ decrease $T_3$ but worsens $T_2$, indicating the tradeoffs among different terms in the error bound with different choices of $\gamma$ and $h$.  $h$ and $\gamma$ do not affect  $T_4$. A useful asymptotic regime is $\gamma\to\infty$ as $n\to\infty$ but $\gamma/n\to0$.  $\gamma\to\infty$ makes $T_2$ and $T_4$ shrink and $\gamma/n\to0$ prevents $T_1$ from dominating the excess risks. To make $T_3$ shrink as $n\to\infty$, let $h\to0$ but $h\gamma^2\to\infty$. In summary, vanishing of the excess risks in Theorems \ref{thm:empirical-main}, \ref{thm:population-generic}, and Corollary \ref{cor:sc} as $n\to\infty$ would need  $\gamma\to\infty$, $\gamma/n\to0$, $h\to0$, and $h\gamma^2\to\infty$.

\vspace{-3pt}\section{Experiments}\label{sec:sim_studies}
We examine the prediction performance of DP-wERM in OWL and benchmark its performance against non-private OWL on both synthetic and real datasets. 

\vspace{-9pt}\subsection{Data}
The synthetic data replicate a typical clinical trial for treatment effect evaluation. 
We draw $\mathbf{x}_i = (x_{i,1}, x_{i,2}, x_{i,3}, x_{i,4})$ from a uniform distribution (i.e., $x_{i,j} \sim U[0, 1]$ independently for all $i=1,\ldots,n$ and $j=1,\ldots,4$). 
We examine  treatment $Y\in\{-1,1\}$ and each individual is randomly assigned with a probability of 0.5 to either treatment.
The propensity is thus $P(y_i=1|\mathbf{x}_i) = P(y_i=-1|\mathbf{x}_i) = 0.5$.
We assume the underlying optimal treatment is the sign of the function $f(\mathbf{x}_i) = 1 + x_{i, 1} + x_{i, 2} - 1.8x_{i, 3} - 2.2x_{i, 4}$.
Treatment benefit (the outcome) $B_i$ for $i=1,\ldots,n$ is drawn independently from $\mathcal{N}(\mu_i, \sigma^2)$ with $\mu_i = 0.01 + 0.02x_{i, 4} + 3y_if(\mathbf{x}_i)$ and $\sigma=0.5$.
If a simulated $B_i<0$, then it is shifted to $B_i + |\min\{B_i\}| + 0.001$ to ensure that $w_i = \frac{B_i}{P(y_i|\mathbf{x}_i)}> 0$. 
We examined a wide range of privacy budgets $\epsilon$ and training dataset sizes $n$ as listed below:
\begin{list}{}{}
    \item $\epsilon\in\{0.1, 0.5, 1, 2, 5, 20, 50, 150, 300, 500, 800, 1000\}$,
    \item $n\in\{200, 500, 800, 1000, 1500, 2000, 2500\}$.
\end{list} 
The large  $\epsilon$ scenarios help gain insights into the asymptotic performance of the DP-wERM algorithm, though they do not correspond to a meaningful privacy guarantee. 

The real data experiments use data from two randomized clinical trials (RCTs). The first is an RCT conducted at St. Jude Children's Research Hospital (SJCRH) that evaluated the efficacy of melatonin on insomnia and neurocognitive impairment in childhood cancer survivors \cite{lubas2022randomized}. Participants were randomized 1:1 to receive either 3 mg of time-released melatonin or a placebo; our final analysis includes 246 participants (120 melatonin, 126 placebo). The primary outcome is the difference in nonverbal reasoning assessment scores between baseline and month 6. Baseline covariates include age at diagnosis (years), age at enrollment (years), sex, race, and cancer diagnosis.
The second dataset is from the CYP-GUIDES (Cytochrome Psychotropic Genotyping Under Investigation for Decision Support) trial, which evaluated hospitalized patients with severe depressive disorders \cite{tortora2020clinical, ruano2020results}. Patients were randomized 1:2 to either standard psychotropic therapy or genetically guided therapy, where CYP2D6-based prescribing recommendations were integrated directly into the EHR. A total of 1,459 genotyped patients are included in our analysis (477 standard therapy, 982 genetically guided therapy). The primary outcome is hospital length of stay (LOS, in hours) that assesses whether genotype-informed prescribing improves inpatient psychiatric care and shortens hospitalization. Baseline features include age at enrollment (years), sex, race, and primary depressive disorder diagnosis.

\vspace{-6pt}\subsection{Experiment settings}
\label{subsec:sim_settings}
In the SJCRH study, since the population of interest is adult survivors of childhood cancer, we bound the enrollment age by [18, 60] years
and the age at diagnosis by [0, 20] years.  The other three features, sex, race, and diagnosis, are coded as 0 or 1. 
In the CYP-GUIDES study, we bound the enrollment age by [18, 89] years. Sex (female vs. male), race (white vs. other), and diagnosis (primary depressive disorder or not), are 0-1 coded.
In both the synthetic and real data experiments, to meet the regularity condition $\norm{\mathbf{x}_i}\le1$ for DP-wERM, we pre-process each observation $\mathbf{x}_i$ for $i=1,\ldots,n$ by dividing it by $\sqrt{p+1}$ ($p$ features plus an intercept term).

As for the (global) upper bound $W$ for weight $w_i=\frac{B_i}{P(y_i|\mathbf{x}_i)}$, we set an upper bound $V=15$ for $B_i$ (if $B_i>15$ in the data, it is clipped to 15),  leading to $W=15/0.5=30$  in the synthetic data experiment. 
In the SJCRH study, we convert $B_i$, which is the difference in the scores from  the nonverbal reasoning  assessment at month 6 and baseline, to an age-adjusted Z-score that follows a $N(0,1)$ distribution.
The shifted benefit $B'_i=B_i + |\min\{B_i\}| + 0.001$ is then used as the final benefit measure.
We set the upper bound  $W=4/0.5=8$, where $4$ is the upper bound for $B_i'$, representing $2$ standard deviations from its mean.
In the CYP-GUIDES study, the primary outcome was hospital LOS. We defined benefit through a transformed outcome given by $\log(24\!\times\!365)-\log(\text{LOS})$, where $24\!\times\!365$ represents the total number of hours in a year and serves as an upper reference bound. Per the RCT design with the 1:2 randomization ratio between the two types of therapy, the global upper bound for the weight was set to $W=\log(24\!\times\!365)/(1/3)$. 
Since all three experiments are randomized trials, then $C=0$ and $\Delta_{2, \boldsymbol{\theta}}=2W/\gamma$ per Theorem \ref{thm:wERM_sens}.

The Huber loss hyper-parameter $h$ in Eqn.~\eqref{eqn:huber} is set at  a ``typical value'' of $0.5$ in all experiments \cite{Chapelle2007}. 
Our approach for choosing the hyperparameter $\gamma$, which depends on $n$ and $\epsilon$, in the synthetic data experiment is described in Section~\ref{subsec:hyperparameter_tuning}.
The choice of $\gamma$ in the real data experiments ($\gamma=50$ in SJCRH and $75$ for GYP-GUIDES) is informed by the results of the synthetic data experiment for datasets with similar $n$ and the sensitivity study results in Section \ref{subsec:hyperparameter_tuning}. 

To evaluate the utility of the privacy-preserving OWL models trained via DP-wERM, we calculate the empirical treatment value, an estimate of the expected clinical benefit $E[\frac{B}{P(Y|\mathbf{x})}\mathbbm{1}(Y=T(\mathbf{x}))]$, defined as
\begin{equation}\label{eqn:empirical_treatment_value}
    V(\hat{f})\!=\!\textstyle\left(\!\sum_{i=1}^n\!\frac{\mathbbm{1}(y_i=\hat{f}(\mathbf{x}_i))}{P(y_i|\mathbf{x}_i)}B_i\right)\!\left(\!\sum_{i=1}^n\! \frac{\mathbbm{1}(y_i=\hat{f}(\mathbf{x}_i))}{P(y_i|\mathbf{x}_i)}\right)^{-1}\!,
\end{equation}
where $\hat{f}$ is the estimated DP predictor function via Algorithm \ref{alg:DP-wERM}.
In addition, in the synthetic data experiment, since the true optimal treatment is known, we can evaluate the accuracy of the optimal treatment assigned by the DP-OWL models. For the synthetic data experiment, we ran the DP-wERM algorithm on 200 simulated datasets for each $n$ and $\epsilon$ combination and obtained the average (95\% confidence intervals/CIs) optimal assignment accuracy rates and empirical treatment values on a test dataset of size 5,000 in each repeat. In the real data experiments, we conducted 200 DP runs at each $\epsilon$ and $\gamma$ to examine the stability of the DP-wERM algorithm.

\vspace{-9pt}\subsection{Results}\label{subec:results}\vspace{-3pt}
The results from the synthetic data experiment are presented in Fig.~\ref{fig:sim_results} (the numerical values in these plots for a subset of $n$ and $\epsilon$ values are found in Table II in the SM). 
As expected, the optimal treatment accuracy rate and empirical treatment value improve as $n$ or $\epsilon$ increases, approaching the non-private results.\footnote{Compared to the empirical results from the concurrently developed DP-OWL algorithm in \cite{Spicker2024}, the optimal treatment accuracy rates in our experiment are similar to theirs for many combinations of $n$ and $\epsilon$, but we caution against direct comparisons as the experimental settings differ between the two studies.
Specifically, the ground-truth treatment assignment functions used in their experiment with synthetic data differ from ours, as evidenced by the differing accuracy levels for non-private OWL between their study vs. ours.} 
We also investigated the sensitivity of the results when $B$ values are re-scaled or discretized, $\gamma$ increases, and  privacy amplification via subsampling \cite{balle2018privacy} is applied to the DP-OWL algorithm, especially at smaller $n$ or $\epsilon$, and did not observe notable result changes from Fig. \ref{fig:sim_results}, suggesting robustness to these modifications. 
\begin{figure}[!htb]
\vspace{-6pt}\centering
\includegraphics[width=0.8\columnwidth]{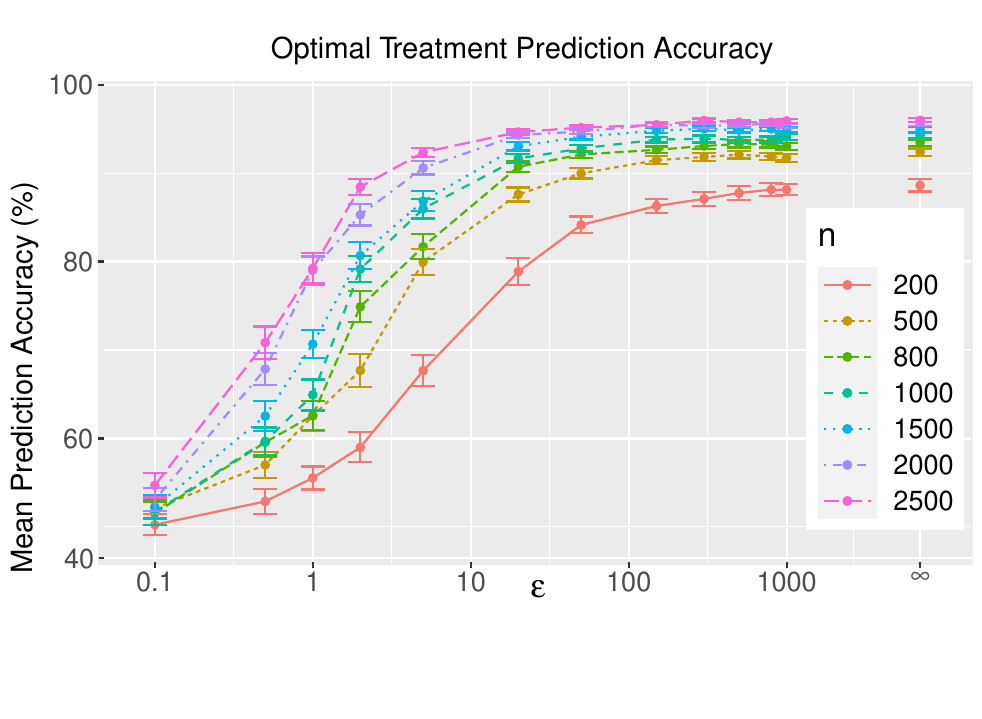}\\[6pt]
\includegraphics[width=0.8\columnwidth]{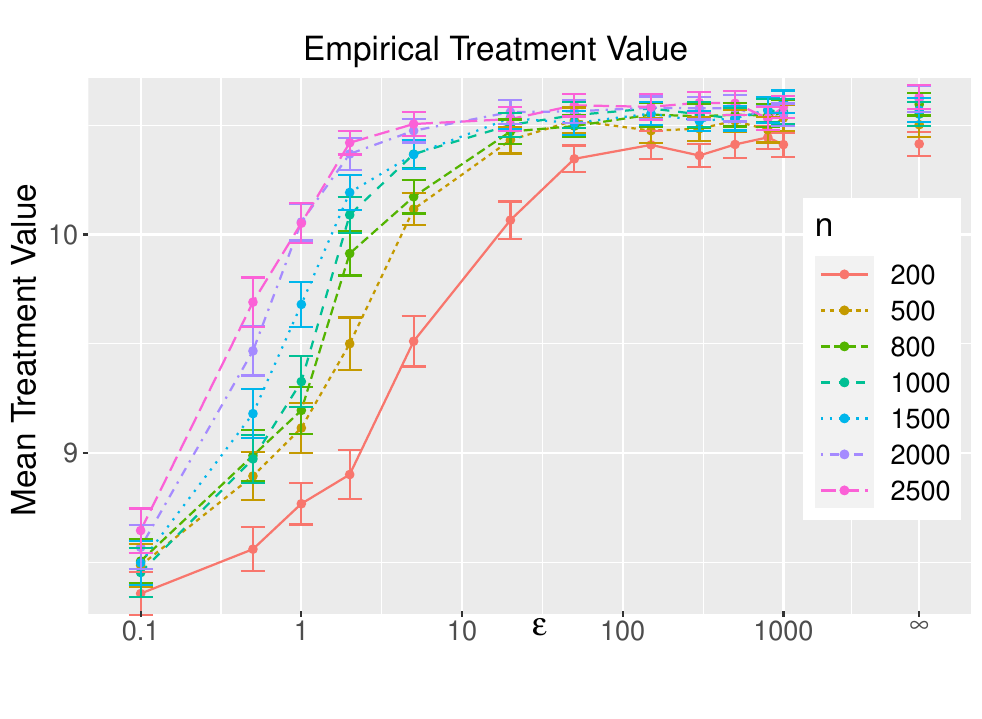}\vspace{-9pt}
\caption{Mean (95\% CI) DP-OWL optimal treatment assignment accuracy rate (top) and empirical treatment value (bottom) on the test set over 200 repeats in the synthetic data experiment.}
\label{fig:sim_results}\vspace{-9pt}
\end{figure}

The results in the two real data experiments are presented in Table~\ref{tab:etv50}. In summary, 
The ITRs derived by the non-private OWL and DP-wERM generate comparable empirical treatment values, suggesting that the utility of the privacy-preserving ITRs is well preserved in terms of estimated clinical benefit.
In the SJCRH study, the estimated value function increases monotonically as $\epsilon$ increases, approaching the non‑private OWL value as privacy constraints are relaxed. The estimated value function in the CYP‑GUIDES study is not strictly monotone for small $\epsilon$ values: when $\epsilon=0.1$ and 0.5, the estimated means fluctuate, accompanied by relatively large SDs. For moderate to large privacy budgets ($\epsilon\ge1$), a clear increasing trend in the value function is observed, consistent with improved utility under weaker privacy constraints.
We also examine the proportion of concordance pairs in the optimal treatment allocation obtained by the non-private and the DP ITRs and run the chi‑squared association test. Across both experiments, the concordance proportions are high and the $p$-values from the tests are close to zero, indicating that the privacy-preserving ITRs gave similar individualized treatment allocations to those obtained by OWL without DP. 
\begin{table}[!htb]
\centering\vspace{-6pt}
\caption{DP-OWL results in the two real-data experiments}\vspace{-6pt}
\label{tab:etv50}
\begin{tabular}{@{}c@{\hspace{3pt}}c@{\hspace{3pt}}c@{}c@{}c@{}c@{}}
    \hline
    $\epsilon$ &  Mean$^\#$  & SD$^\#$ & \multicolumn{1}{c}{\begin{tabular}[c]{@{}c@{}}optimal ITR\\allocation$^*$ \\ (melatonin:placebo)\end{tabular}} & \multicolumn{1}{c}{\begin{tabular}[c]{@{}c@{}}concordance \\pairs$^\ddagger$\\proportion\end{tabular}} & $p$-value$^\dagger$  \\ \hline
    \multicolumn{6}{c}{the SJCRH Study ($n=246$)}\\\hline
    0.1  & 2.093 & 0.091 & 156:90 & 0.703 & $1.37\!\times\! 10^{-6}$ \\
    0.5  & 2.094 & 0.101 & 172:74 & 0.793 & $4.96\!\times\!10^{-13}$ \\
    1  & 2.095 & 0.111 & 187:59 & 0.886 & $<2.2\!\times\!10^{-16}$ \\
    2  & 2.098 & 0.110 & 188:58 & 0.890 & $<2.2\!\times\!10^{-16}$ \\
    5  & 2.099 & 0.103 & 194:52 & 0.947 & $<2.2\!\times\!10^{-16}$ \\
     $\infty$  & 2.101 & N/A & 197:49 & N/A & N/A \\
    \hline
    \multicolumn{6}{c}{the CYP-GUIDES Study ($n=1459$)}\\
    \hline
    0.1 & 4.144 & 0.205 & 1164:295 & 0.801 & $6.10\!\times\!10^{-4}$ \\
    0.5 & 4.141 & 0.209 & 1251:208 & 0.862 & $5.49\!\times\!10^{-8}$ \\
    1   & 4.108 & 0.261 & 1424:35  & 0.981 & $<2.2\!\times\!10^{-16}$ \\
    2   & 4.123 & 0.162 & 1451:8   & 1.000 & $<2.2\!\times\!10^{-16}$ \\
    5   & 4.124 & 0.119 & 1451:8   & 1.000 & $<2.2\!\times\!10^{-16}$ \\
    $\infty$ & 4.151 & N/A & 1451:8 & N/A & N/A \\
    \hline
    \multicolumn{6}{l}{$^\#$ \footnotesize{empirical treatment value for the outcome over 200 repeats.}}\\
    \multicolumn{6}{l}{$^*$ \footnotesize{assigned if at least 50\% of repeats favor it;}}\\
    \multicolumn{6}{l}{$^\ddagger$ \footnotesize{optimal ITR allocation between DP-wERM and nonprivate OWL;}}\\
    \multicolumn{6}{l}{$^\dagger$ \footnotesize{chi-squared association test between DP and non-private ITRs;}}\\
    \multicolumn{6}{l}{\footnotesize{\hspace{5pt}  p-value $\le 0.05$ indicates statistically significant concordance.}}\\
    \hline
\end{tabular}\vspace{-9pt}
\end{table}

The privacy-preserving results in the SJCRH study are more closely aligned with the original nonprivate results, compared to the synthetic data experiments of similar $n$.
One possible explanation is that the SJCRH study presents a simpler ITR problem, with a stronger signal for the optimal treatment that is less likely to be obscured by the noise introduced by the DP mechanism. Specifically, the synthetic data experiment was designed to mimic real-world scenarios with two competing treatments, where different individuals can benefit from one or the other.
In the SJCRH study, placebo is expected to offer minimal benefit (with an expected $B$ close to 0), while melatonin is likely to be the optimal treatment for most subjects,
which is indeed the case as reflected by a melatonin:placebo ITR ratio of 197:49 and a mean empirical treatment value of 2.101 (close to the mean of $B$) in the original ITR results.
For the CYP‑GUIDES study, the ITR allocation is also unbalanced, with genetically guided therapy assigned to the majority of patients. Several explanations may account for this observation. First, genetically guided prescribing may indeed offer greater benefit for most patients in this inpatient psychiatric population; second, the limited number of baseline covariates included in the model may constrain its ability to capture treatment-effect heterogeneity. Consequently, the model may preferentially recommend a single treatment for most patients.
Nevertheless, in both real data experiments, the overall privacy-utility trend still holds -- smaller $\epsilon$ results in greater deviations from the original results.

To assesses the sensitivity of DP-wERM derived ITR to different $\gamma$ values, we examined more values of $\gamma$ in addition to $\gamma=50$ and 75 in Table \ref{tab:etv50}. Fig.~\ref{fig:real_results}, generated for various metrics, suggests that the learned ITRs are robust and remain stable across different $\gamma$ values given $\epsilon$.
\begin{figure}[!htb]
\centering
\includegraphics[width=0.95\columnwidth]{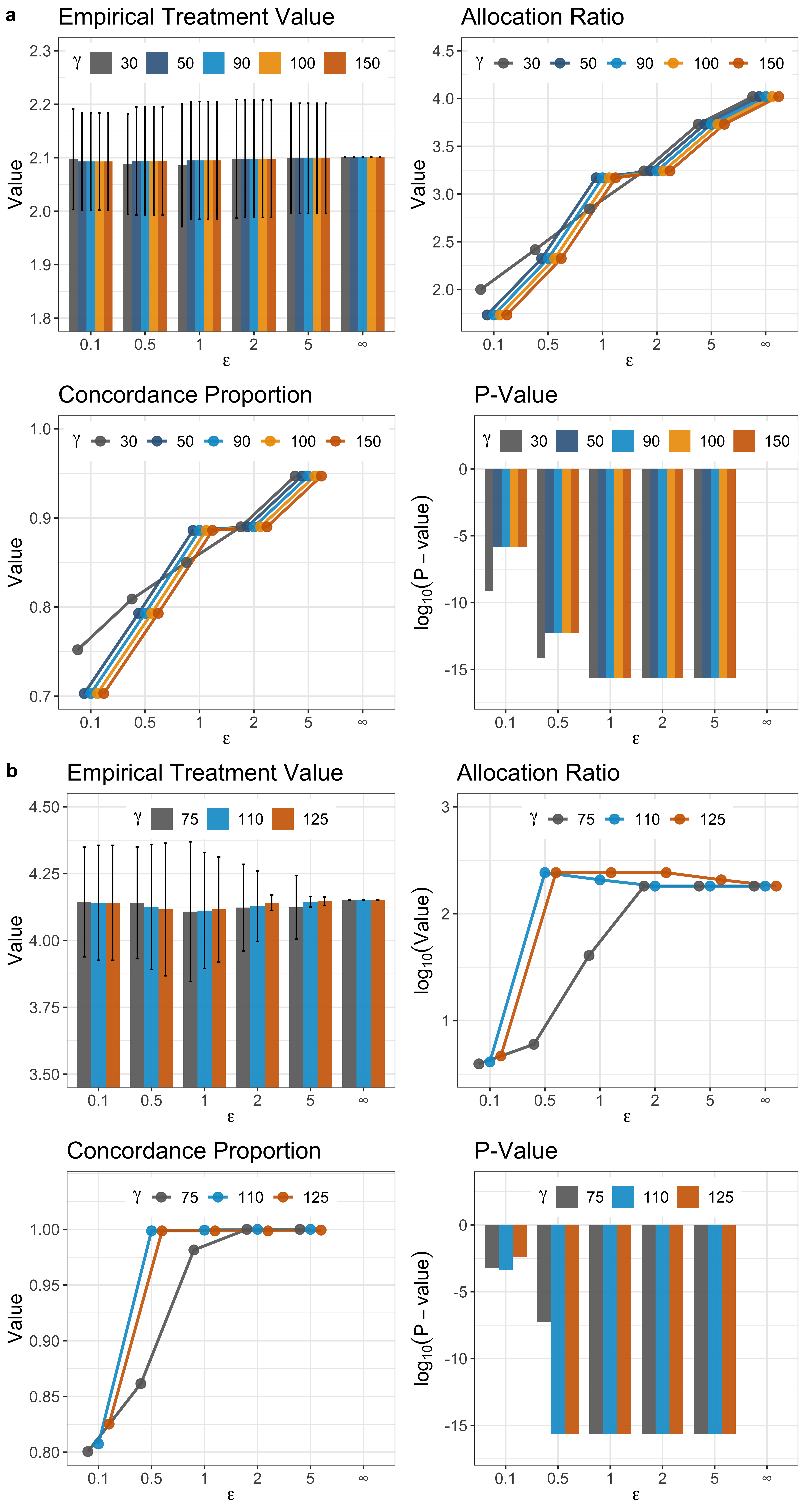}
\vspace{-9pt}
\caption{Sensitivity analysis of $\gamma$ in the two real data experiments:  the SJCRH study (a. top) and the CYP-GUIDES study (b. bottom). Upper left: Mean$\pm$SD of the empirical treatment value for the outcome over 200 repeats. Upper right: ITR allocation ratios (patients were assigned to melatonin or genetically guided therapy if it was recommended in more than 100 out of the 200 repeats). Bottom left: Concordance pair proportion in optimal ITR allocation between DP-wERM vs nonprivate OWL. Bottom right: $\log_{10}$ of the p-value obtained from the chi-squared association tests of the optimal allocation between private and nonprivate ITRs.} \label{fig:real_results}\vspace{-12pt}
\end{figure}

\vspace{-6pt}\subsection{Practical hyperparameter tuning \& sensitivity analysis} \label{subsec:hyperparameter_tuning}\vspace{-3pt}
Due to space limitation, we present our  hyperparameter tuning approach in  Algorithm 2 of the SM. In practice, to select $\gamma$ via Algorithm 2, one may first check whether there exists an independent dataset $D_0$ with distributional characteristics similar to those of the sensitive dataset $D$.  If such $D_0$ exists, even if it is relatively small, it can be used to tune $\gamma$ without incurring  any privacy loss on $D$.  
In the synthetic data experiment  in Section \ref{subsec:sim_settings}, we assumed such a $D_0$ existed and generated $D_0$ in the same way as but independently from $D$. With an optimal $\gamma$ obtained from $D_0$, we were able to use the entire $D$ and privacy budget to fit the DP-OWL model in the synthetic data experiment.\footnote{This was at least partially intentional so that we could focus on examining the privacy–utility trade-off attributable to the DP-wERM method while minimizing the influence of other aspects of the model-fitting process in the synthetic data  experiment.}  
That said, in many practical situations, having a $D_0$ that generally matches the distribution of $D$ is the exception rather than the rule. More commonly, one encounters the one of following two scenarios: (1) there is an independent $D_0$, but it differs from $D$ in certain statistical aspects; (2) $D_0$ is simply not available.  
We conduct experiments to investigate the robustness of Algorithm 2 yielding good choices on $\gamma$ when $D_0$ differ from $D$ in some aspects in scenario (1) and provide some practical guidance on hyperparameter tuning in scenario (2).

First, we note that since the optimal value for $\gamma$ in DP-wERM may vary with training size $n$ and  privacy budget $\epsilon$, when tuning $\gamma$ via Algorithm 2  after splitting $D_0$ into training and validation sets, one should ensure the training set is of size $n$ (sampling with replacement as necessary), for each candidate $\gamma$. 

Figs.~\ref{fig:sens_study_theta} and \ref{fig:sens_study_dist} show a subset of the results from the experiments when $D$ and $D_0$ differ in ground optimal treatment function and feature distribution, respectively. Complete sensitivity analysis results, including additional results for when $D$ and $D_0$ differ in number of features and for various sizes of $D_0$, are provided in the SM. In summary, across all combinations of $n$ and $\epsilon$, the optimal $\gamma$ chosen based on an ``imperfect'' $D_0$ (i.e., dissimilar to $D$ in some aspects) is consistently similar to the optimal $\gamma$ when $D_0$ is simulated the same way as $D$. This demonstrates the robustness of our hyperparameter tuning method, confirming that an independent proxy dataset can reliably guide hyperparameter selection even when it deviates from the sensitive data in some key statistical aspects.
The usefulness of the ITR results in the real-data experiments  further supports this observation, where $\gamma$ was informed by the independent $D_0$ of similar $n$ in the synthetic data experiment. 

When $D_0$ is unavailable, one can tune $\gamma$ by splitting $D$ into two disjoint parts: one portion is used to select $\gamma$ under $\epsilon$-DP  via a DP hyperparameter tuning approach (e.g., \cite{chaudhuri2011}); and
the other trains wERM with the selected hyperparameter under $\epsilon$-DP.
An alternative is to split $\epsilon$ between tuning $\epsilon_1$ and model fitting $\epsilon_2$.\footnote{It is well established that, for a single dataset, the sequential release of two DP outputs on the same dataset satisfying $\epsilon_1$-DP and $\epsilon_2$-DP, respectively, satisfies $(\epsilon_1+\epsilon_2)$-DP \cite{Dwork2006}.} While this approach utilizes all data for both tuning and training, the smaller individual budgets for each may present a comparable utility trade-off relative to the training on a subset of the full data.
\begin{figure}[!htb] 
    \centering \vspace{-12pt}
    \subfloat{\includegraphics[width=0.285\linewidth]{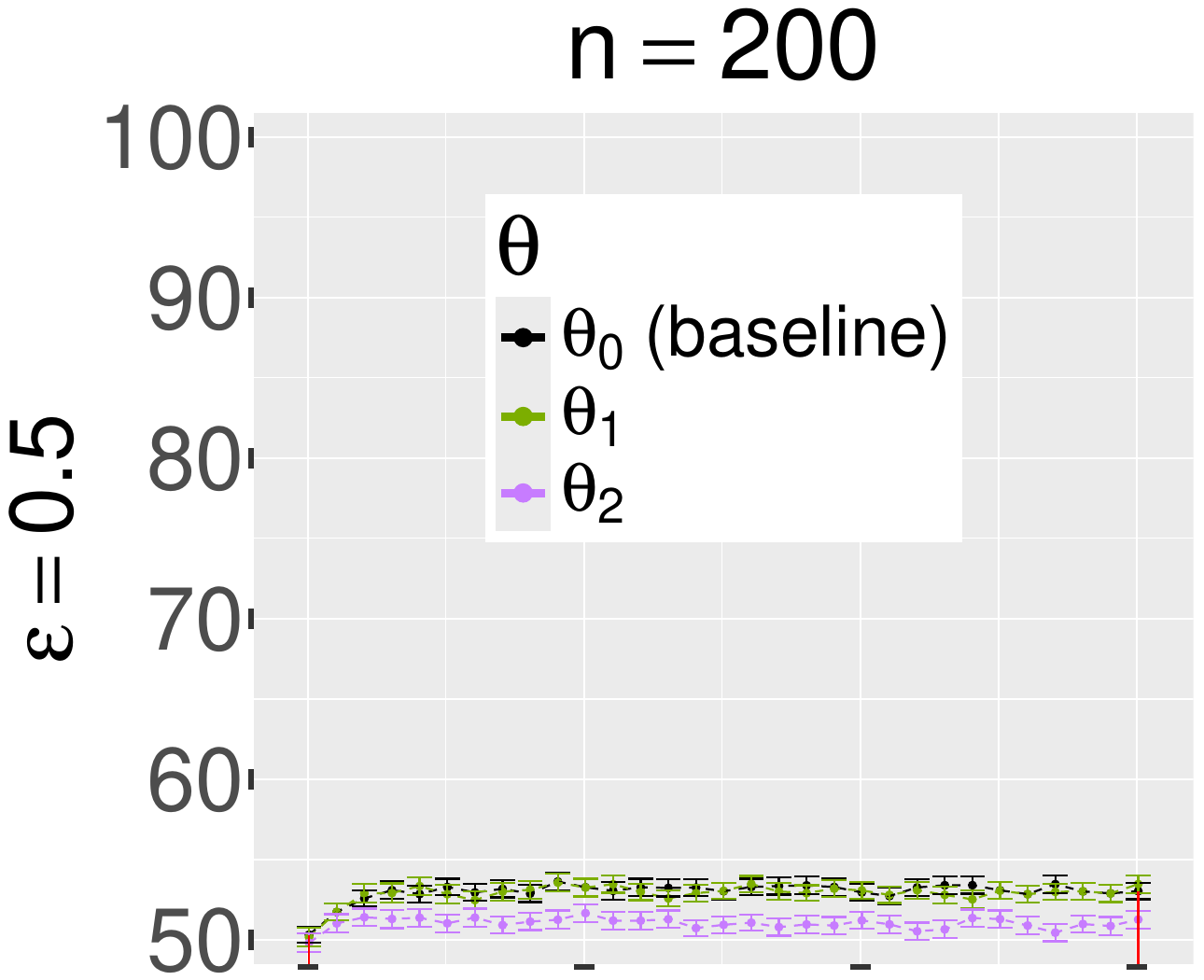}}
    \subfloat{\includegraphics[width=0.235\linewidth]{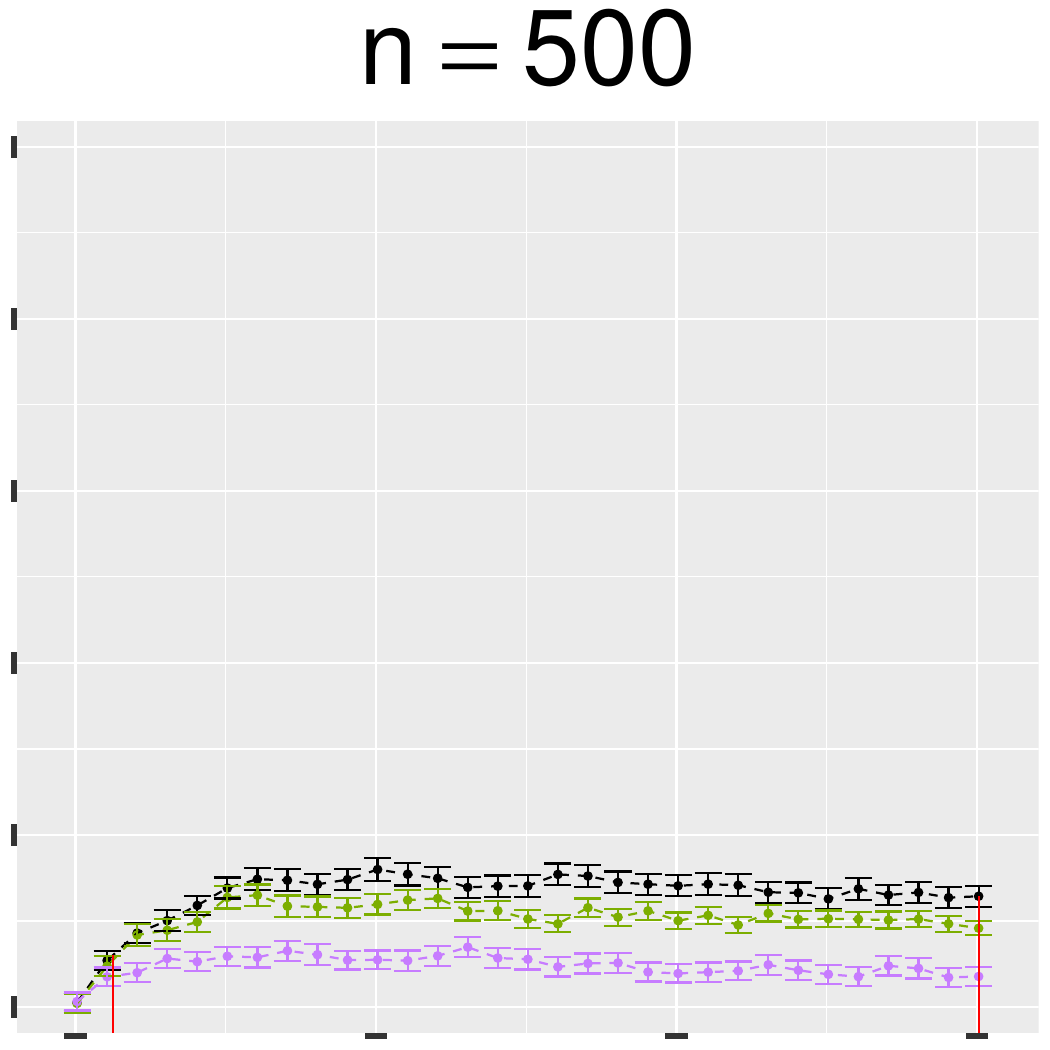}}
    \subfloat{\includegraphics[width=0.235\linewidth]{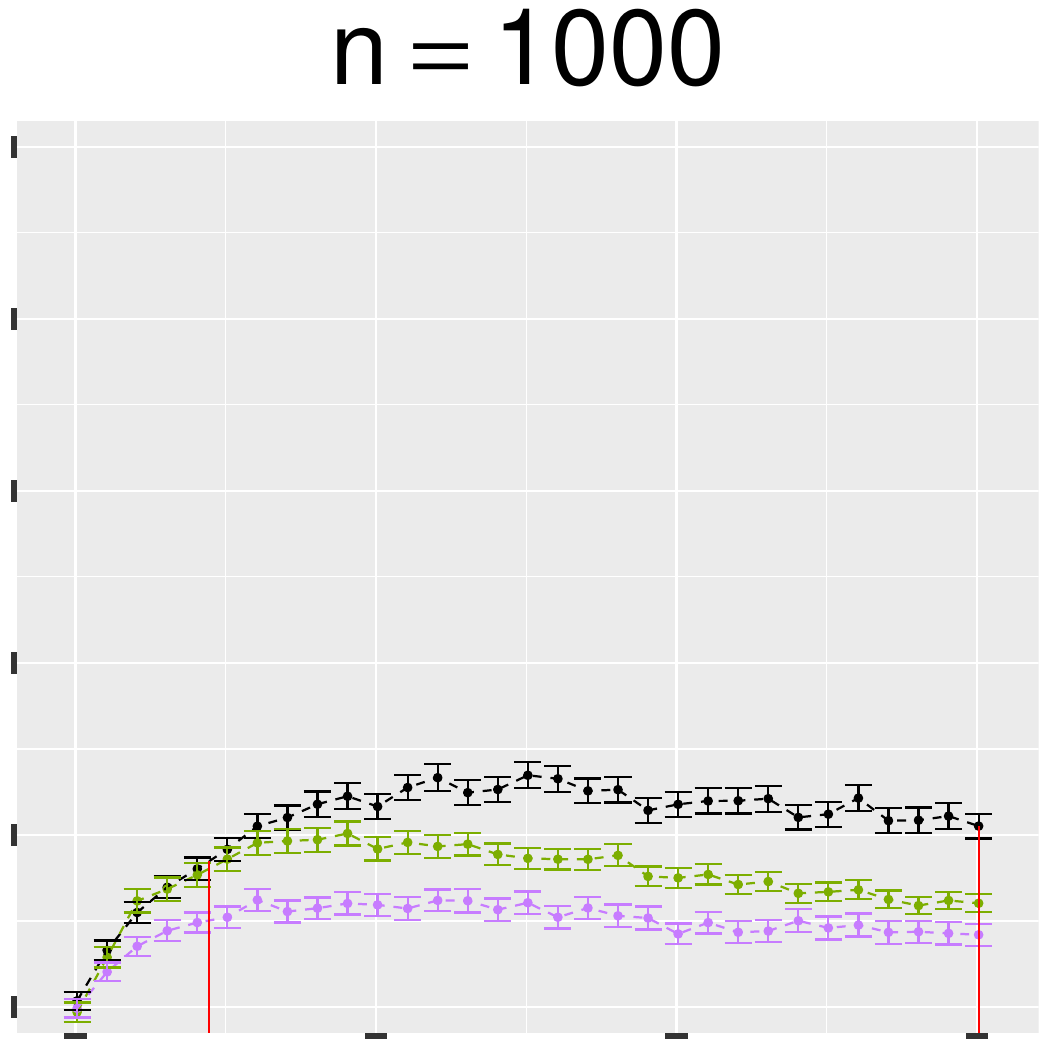}}
    \subfloat{\includegraphics[width=0.235\linewidth]{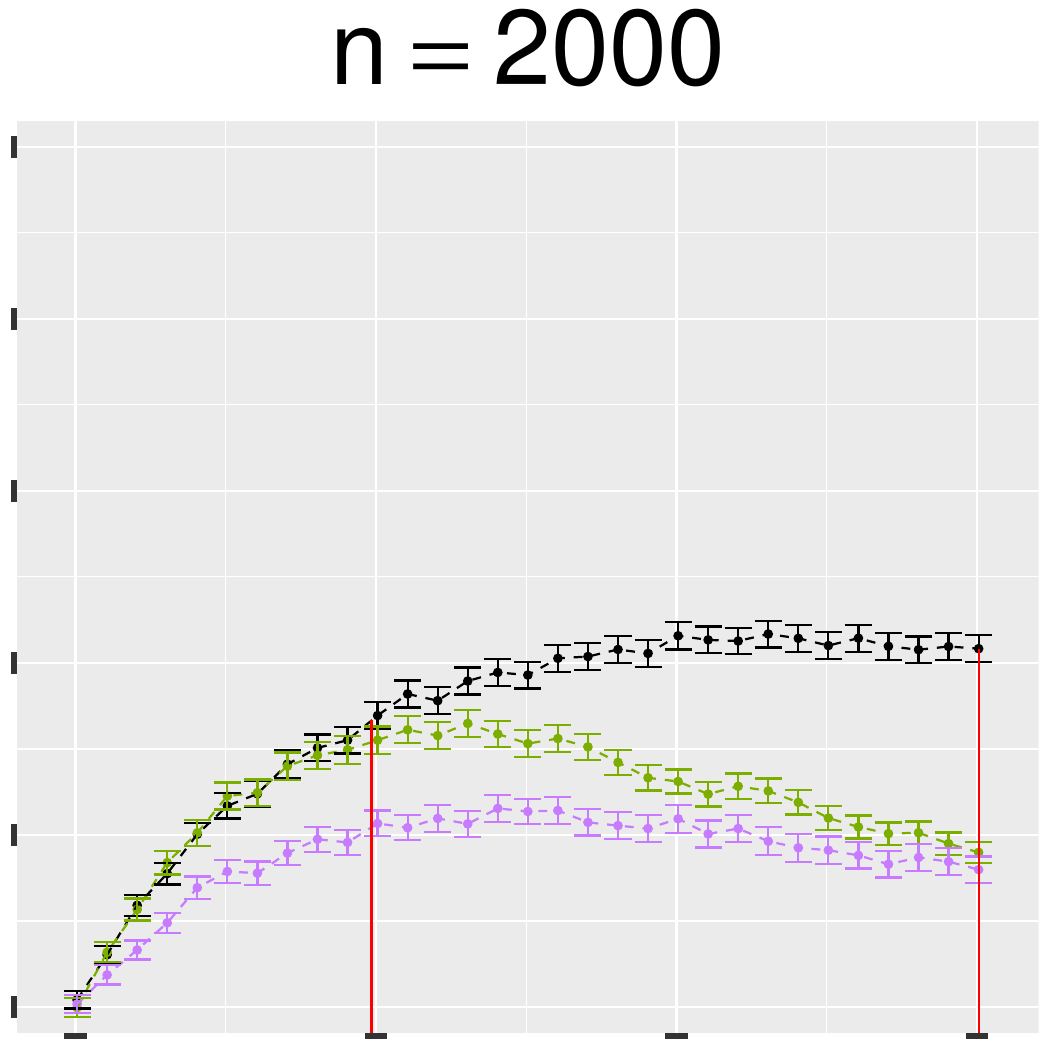}}
    \vspace{-8pt}

    \subfloat{\includegraphics[width=0.285\linewidth]{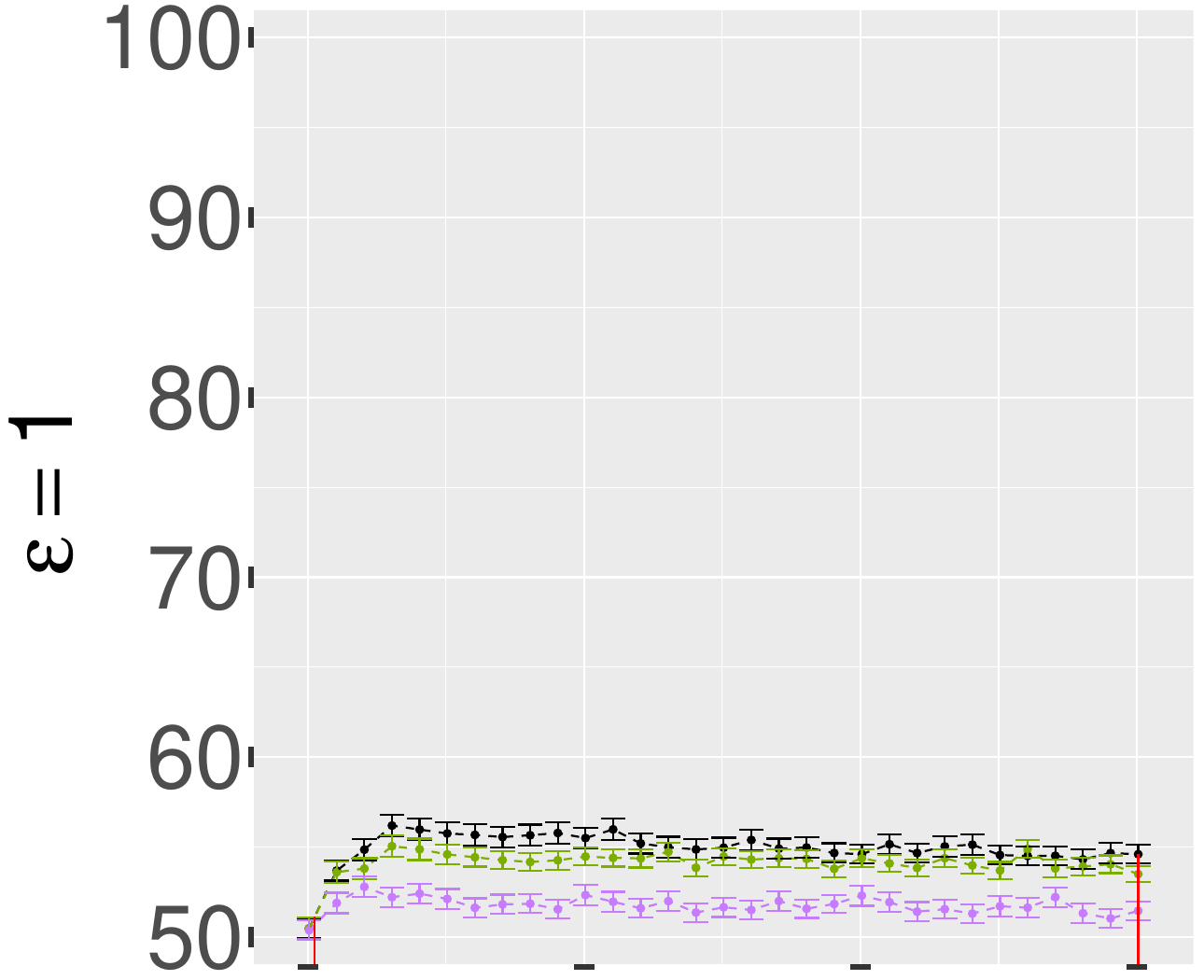}}
    \subfloat{\includegraphics[width=0.235\linewidth]{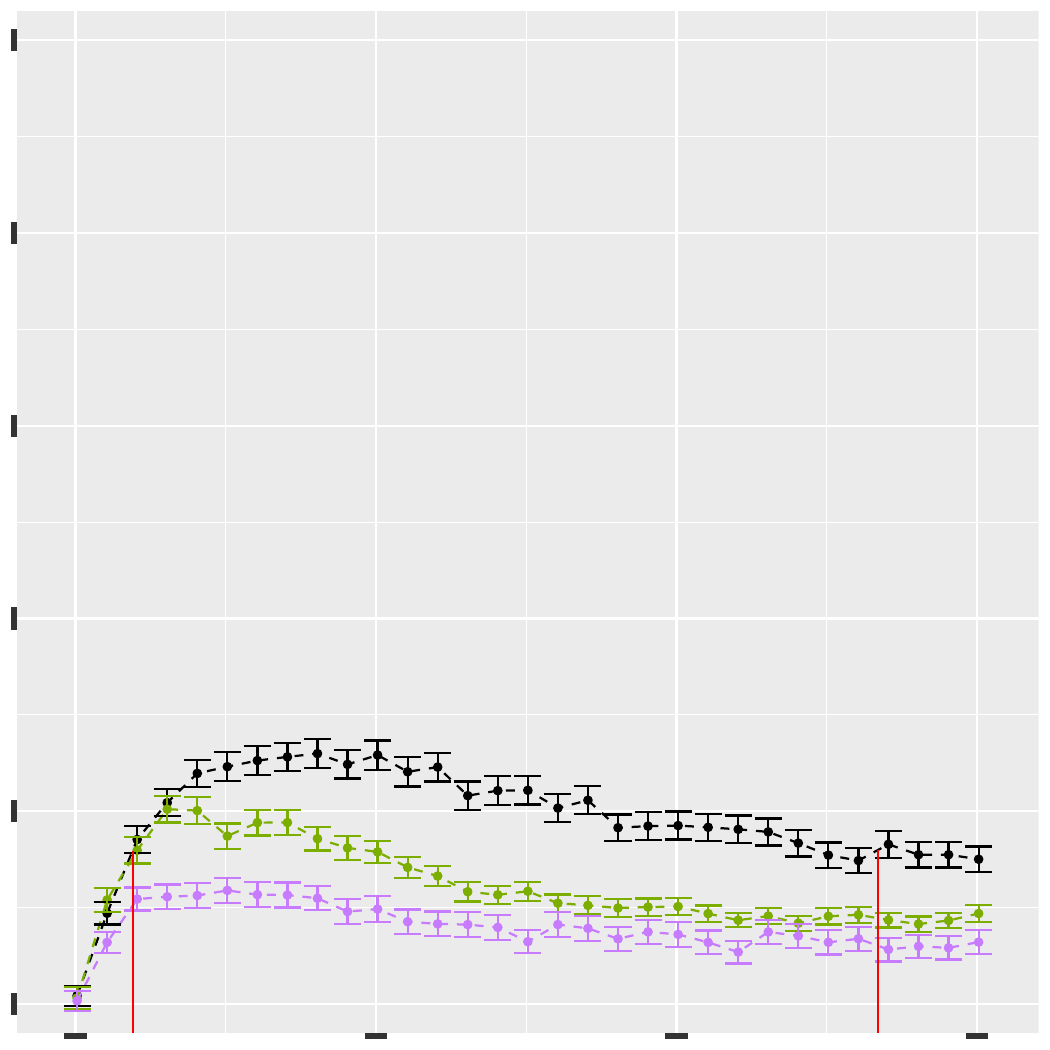}}
    \subfloat{\includegraphics[width=0.235\linewidth]{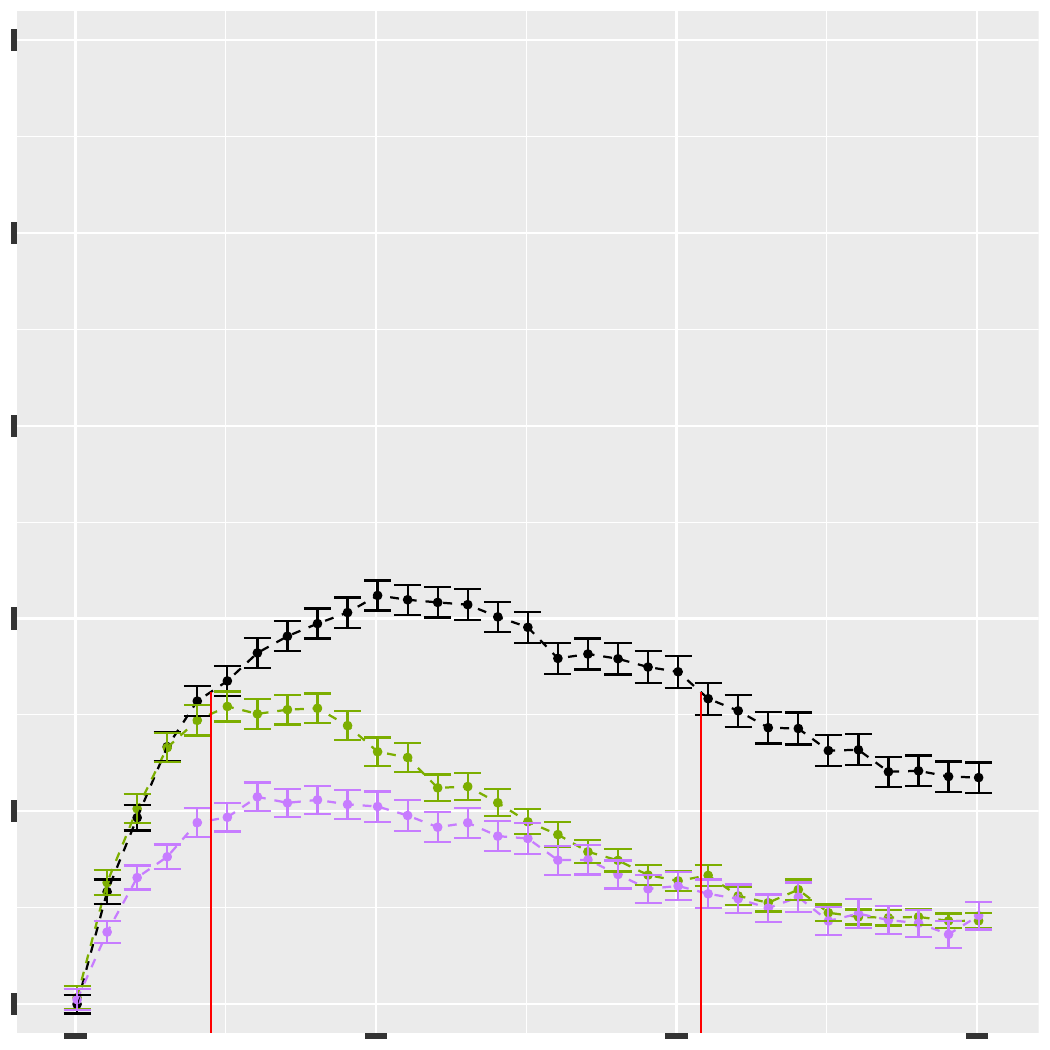}}
    \subfloat{\includegraphics[width=0.235\linewidth]{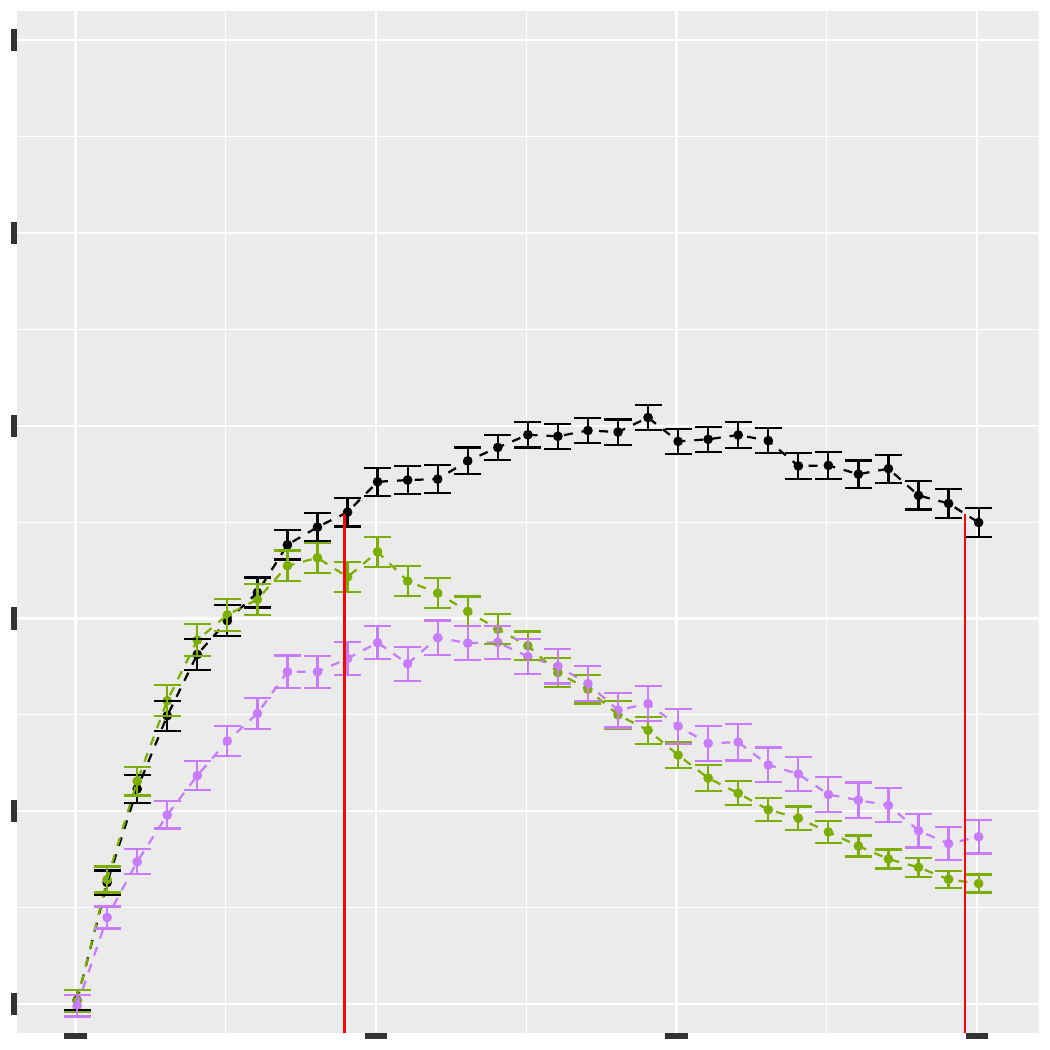}}
    \vspace{-8pt}

    \subfloat{\includegraphics[width=0.285\linewidth]{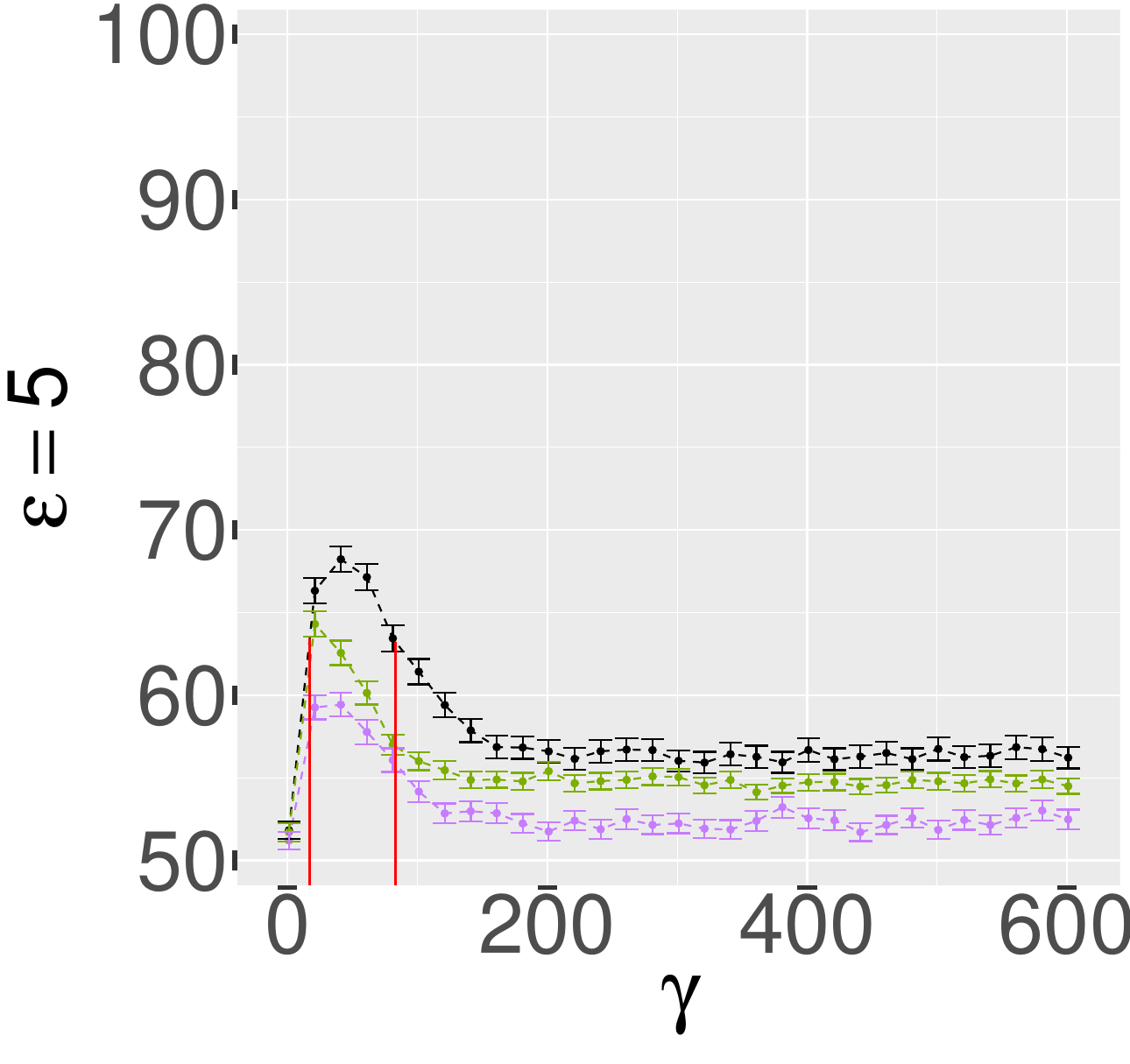}}
    \subfloat{\includegraphics[width=0.235\linewidth]{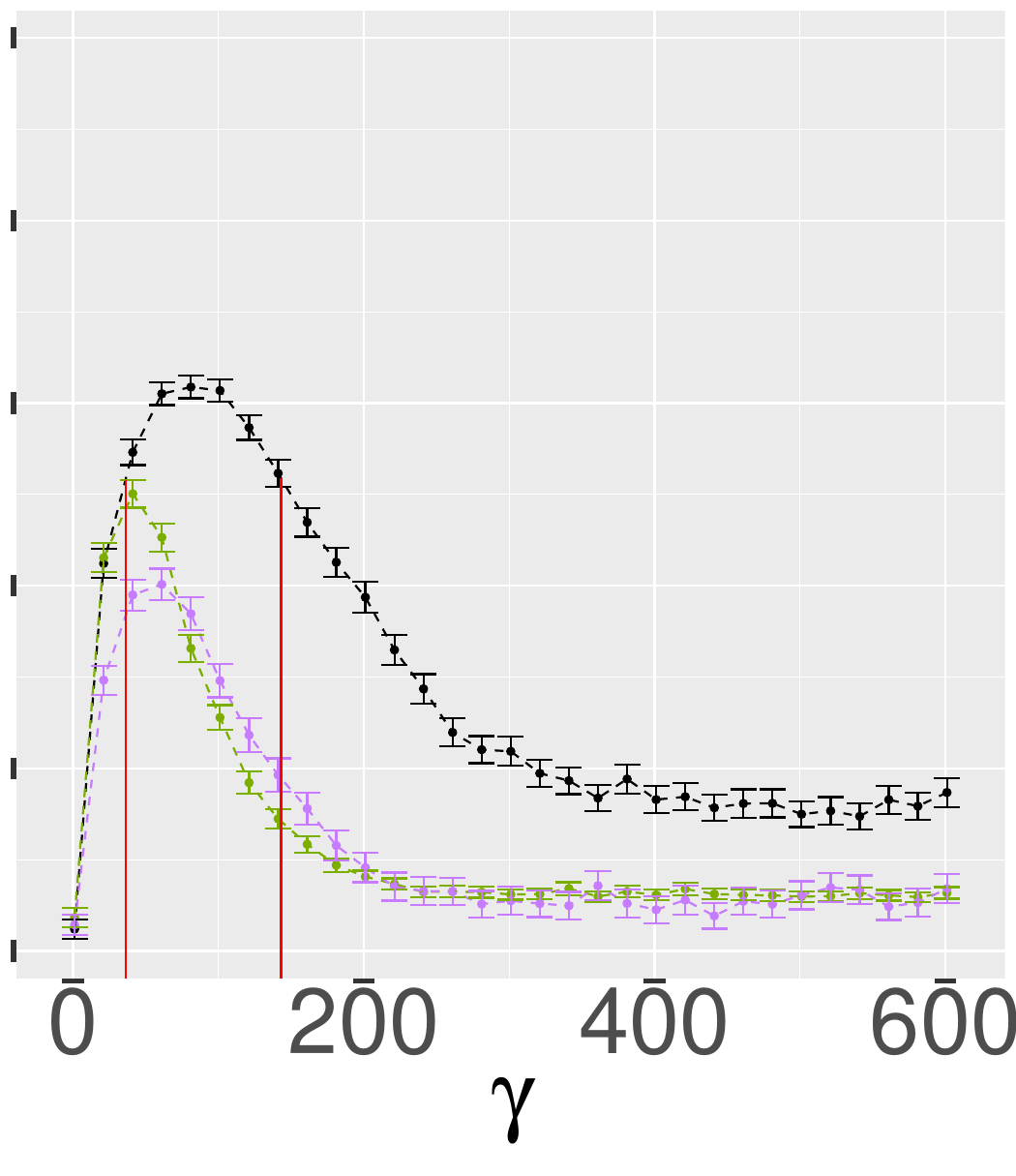}}
    \subfloat{\includegraphics[width=0.235\linewidth]{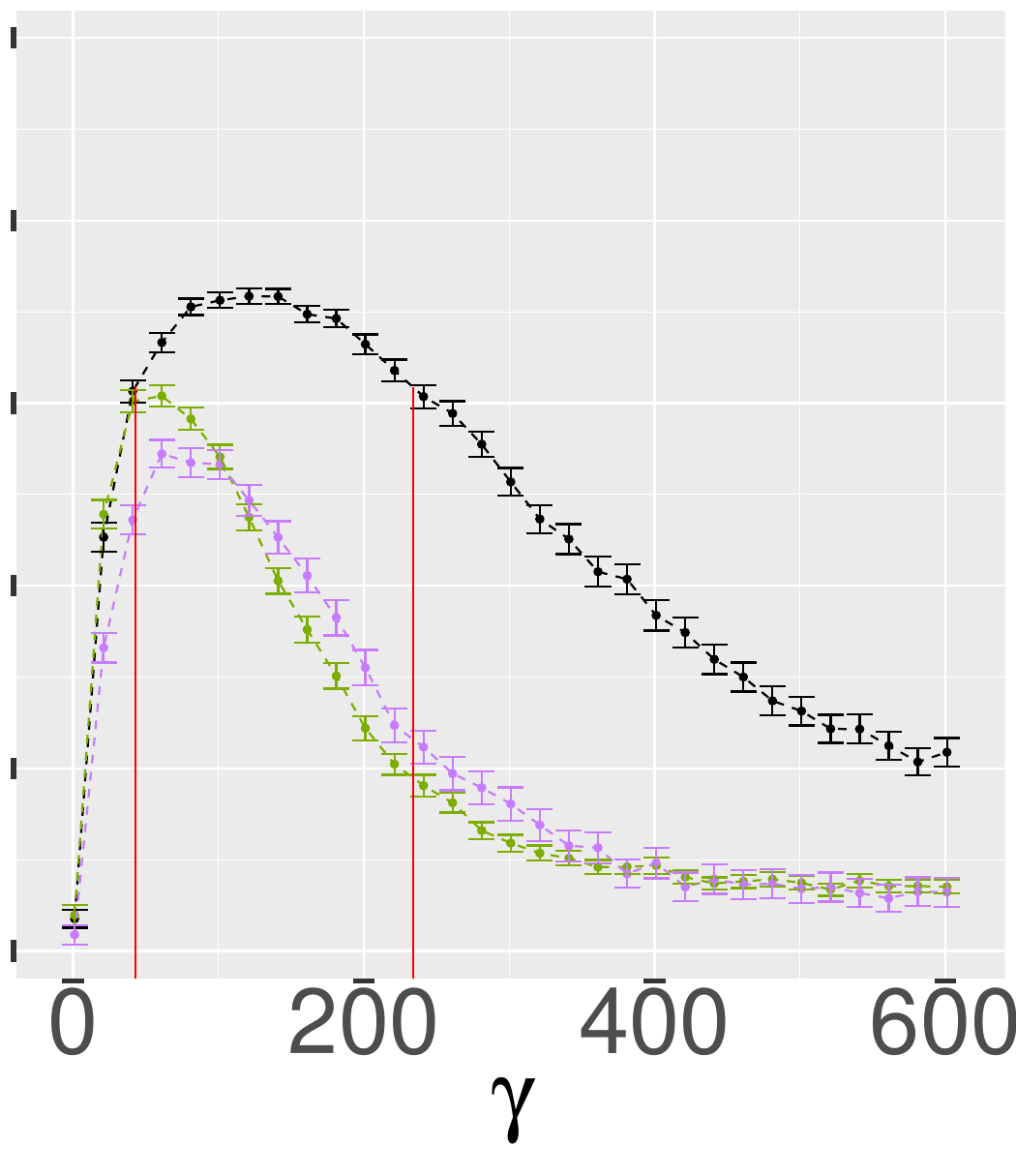}}
    \subfloat{\includegraphics[width=0.235\linewidth]{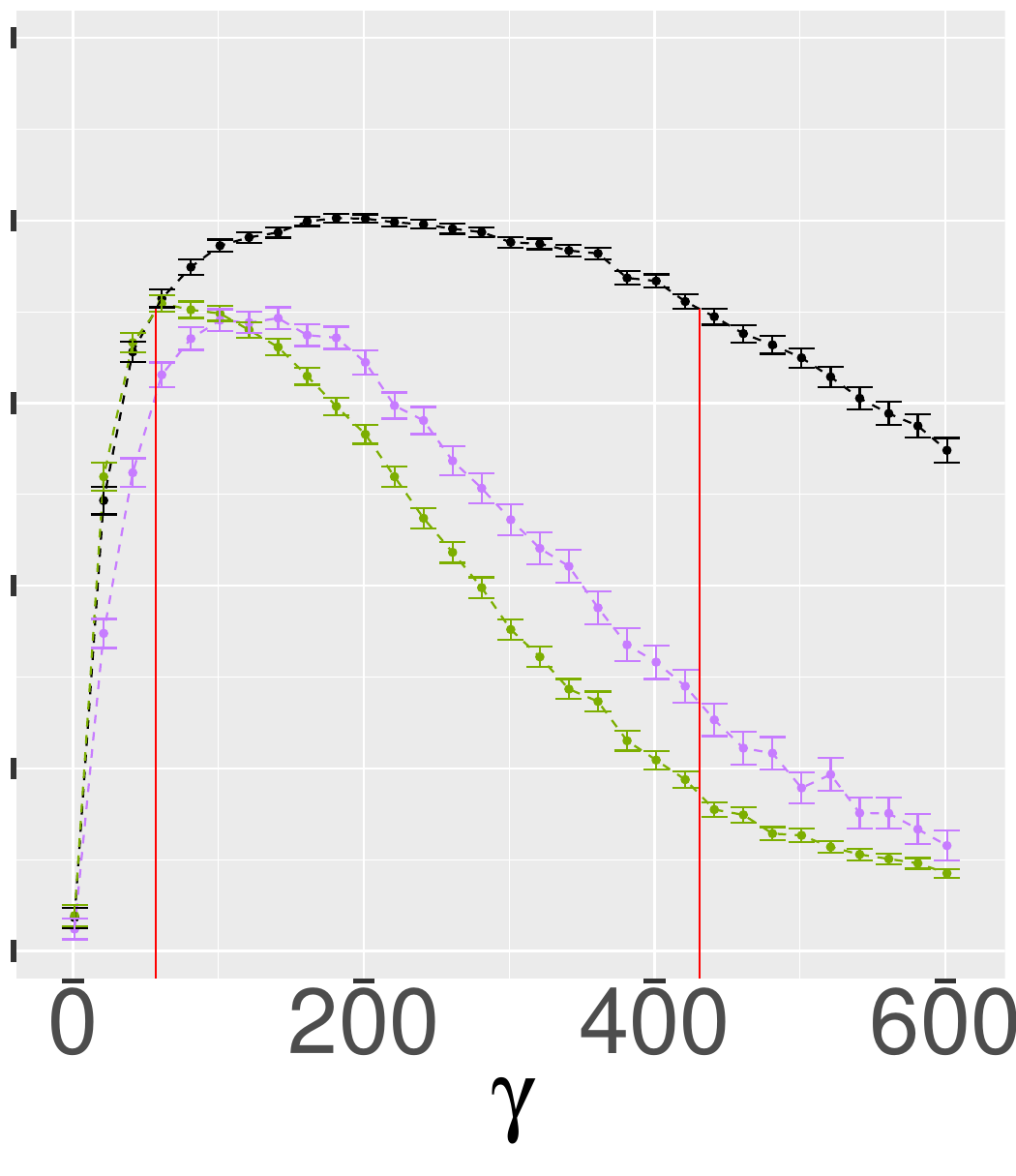}}
    \vspace{-10pt}
    \caption{Sensitivity Analysis of $\gamma$ Selection: average (95\% CI) validation set accuracy (\%) at different $\gamma$ for various combinations of $\epsilon$ and $n$.
    Different lines represent different $\boldsymbol{\theta}$ underlying the independent dataset $D_0$: 
    $\boldsymbol{\theta}_0 = [1, 1, 1, -1.8, -2.2]^\intercal$  (black) matches the sensitive dataset; $\boldsymbol{\theta}_1 = [1, -1, -1, 1.5, -1.5]^\intercal$ (green) and $\boldsymbol{\theta}_2 = [1, -0.5, 0.5, 1, 1.5, -2.5, -2]^\intercal$ (magenta) represent alternatives.
    The red vertical lines mark the region of $\gamma$ where the corresponding accuracy in the baseline case ($\boldsymbol{\theta}_0$) is within $5\%$ of the highest accuracy.}  \label{fig:sens_study_theta}

    \subfloat{\includegraphics[width=0.285\linewidth]{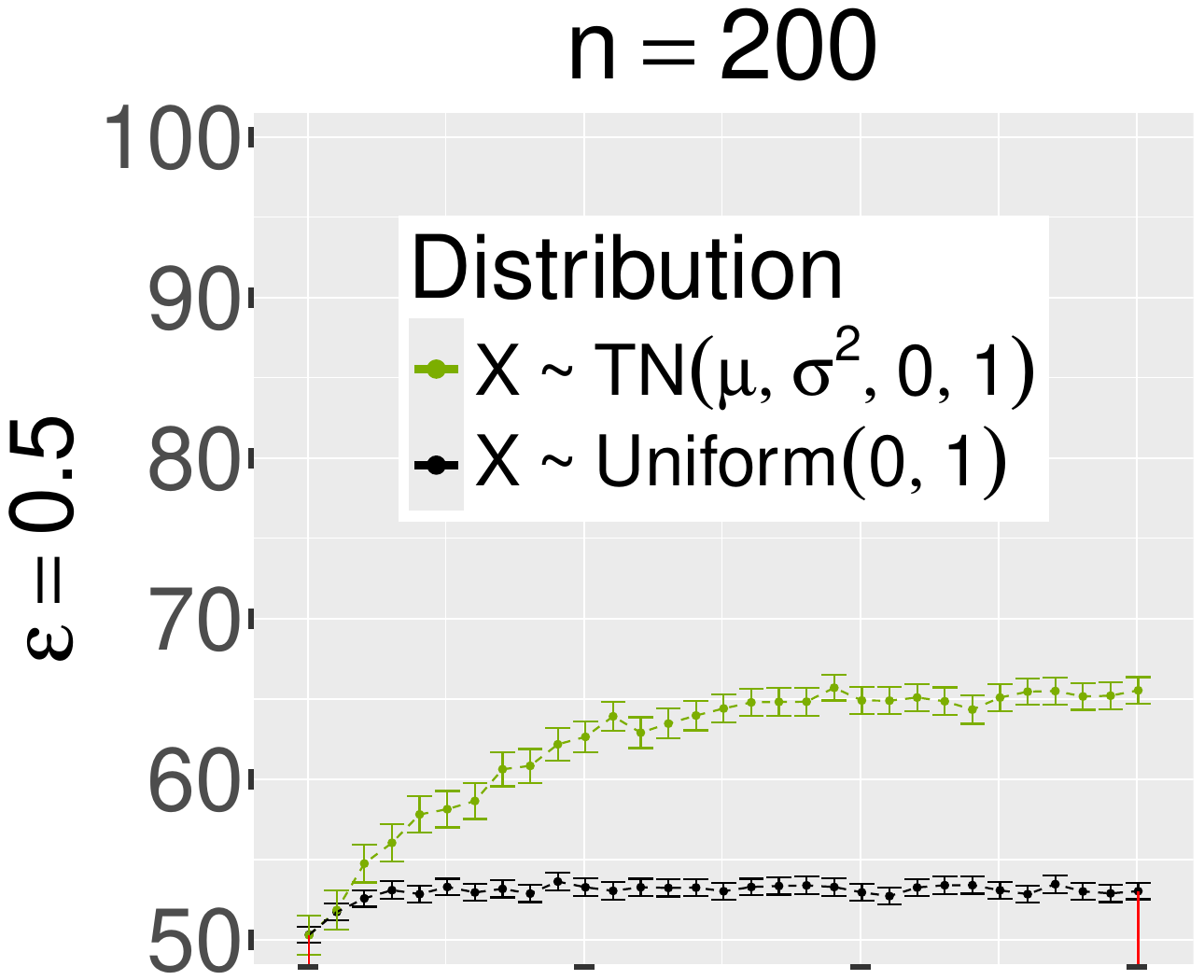}}
    \subfloat{\includegraphics[width=0.235\linewidth]{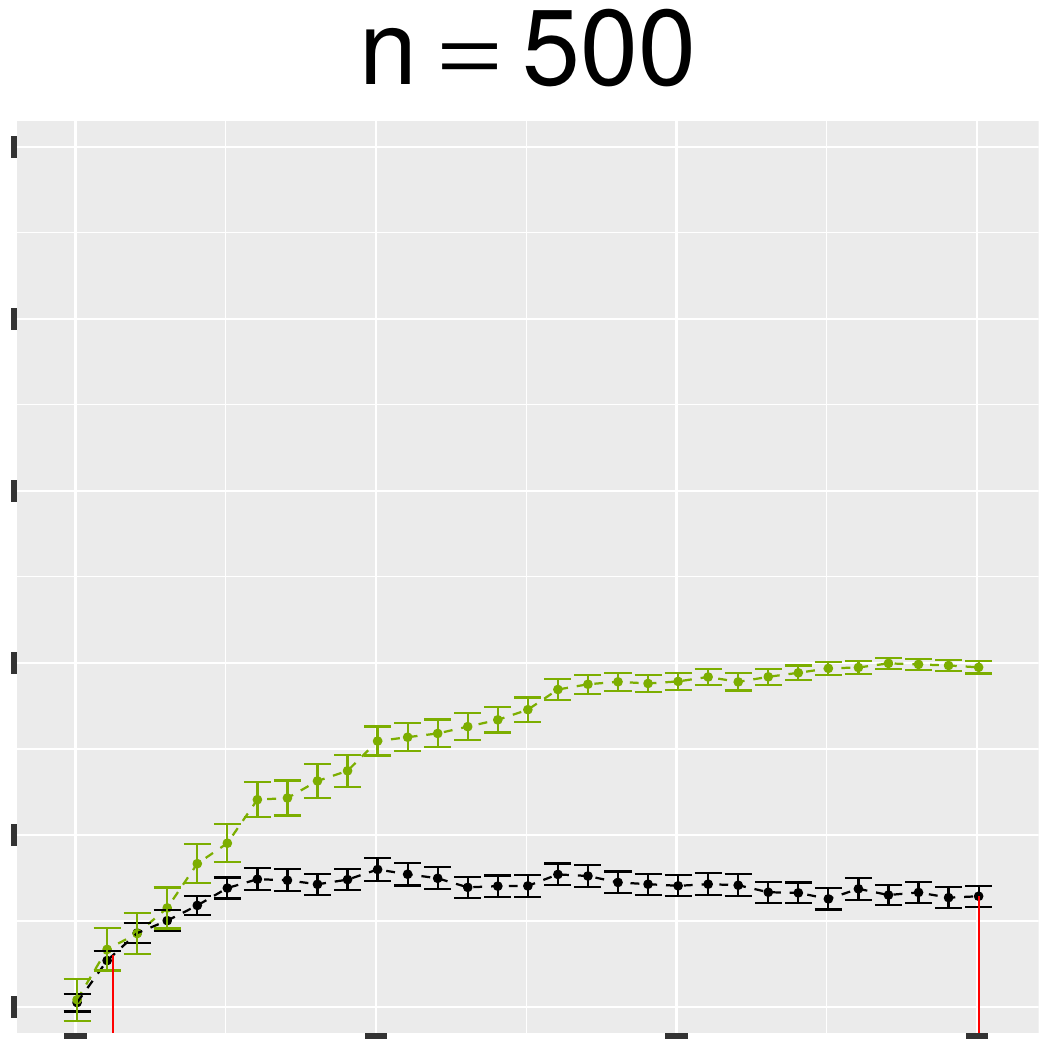}}
    \subfloat{\includegraphics[width=0.235\linewidth]{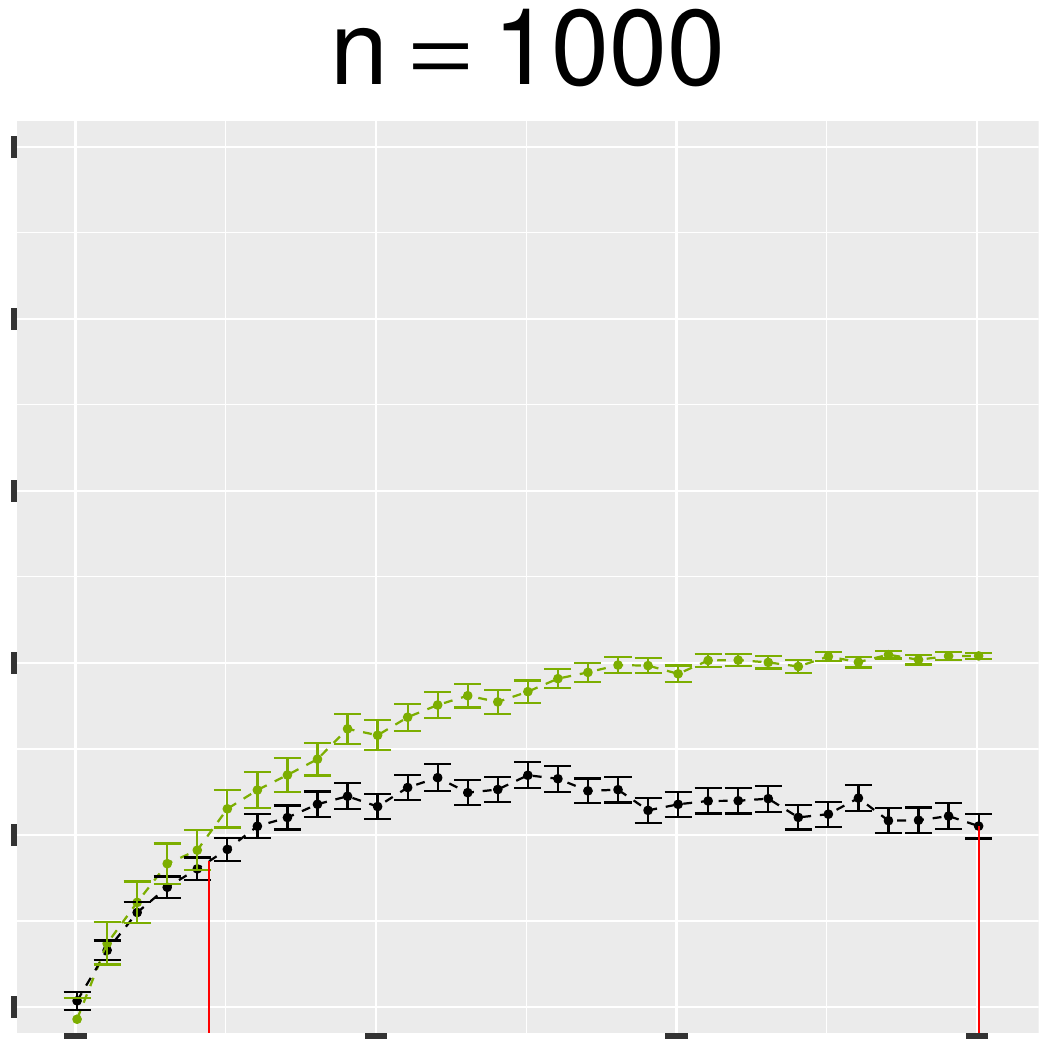}}
    \subfloat{\includegraphics[width=0.235\linewidth]{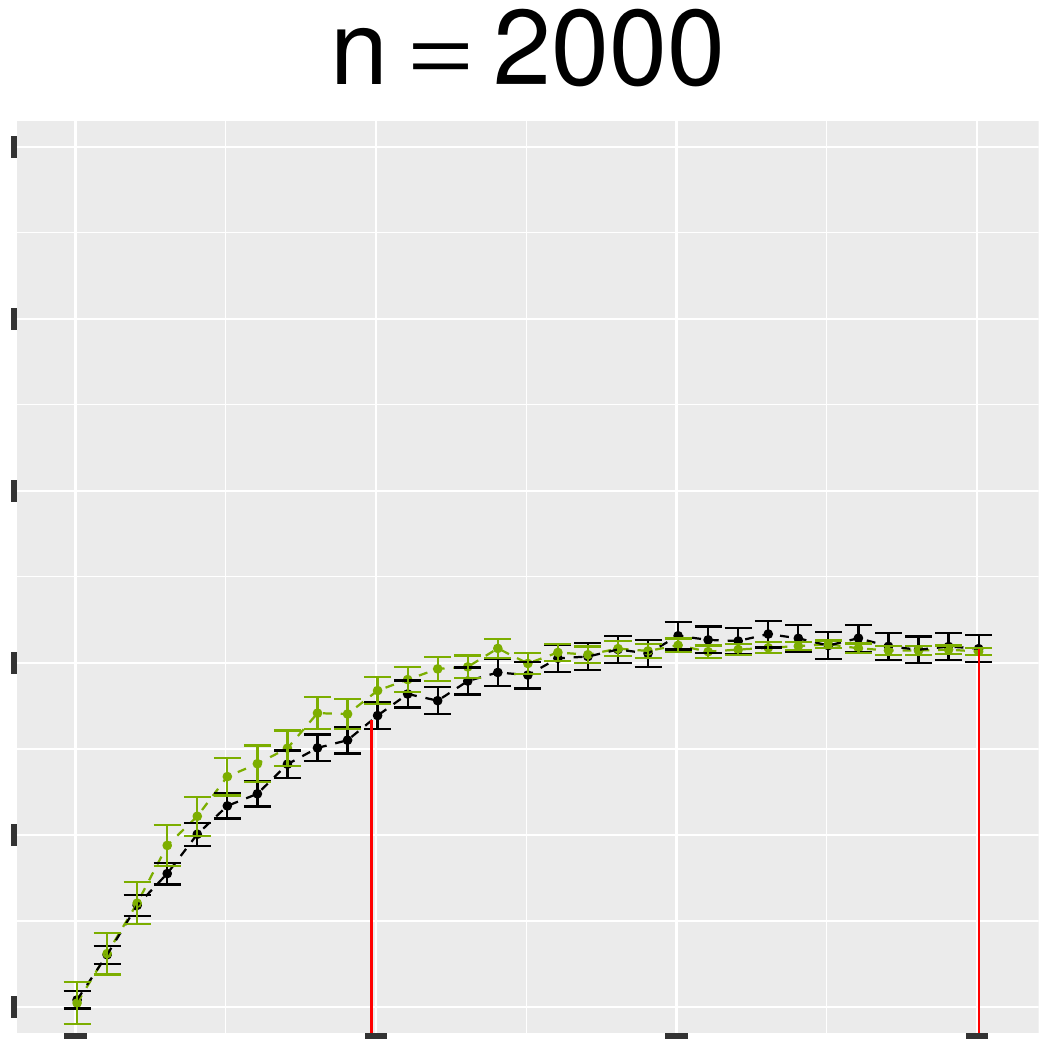}}
    \vspace{-8pt}

    \subfloat{\includegraphics[width=0.285\linewidth]{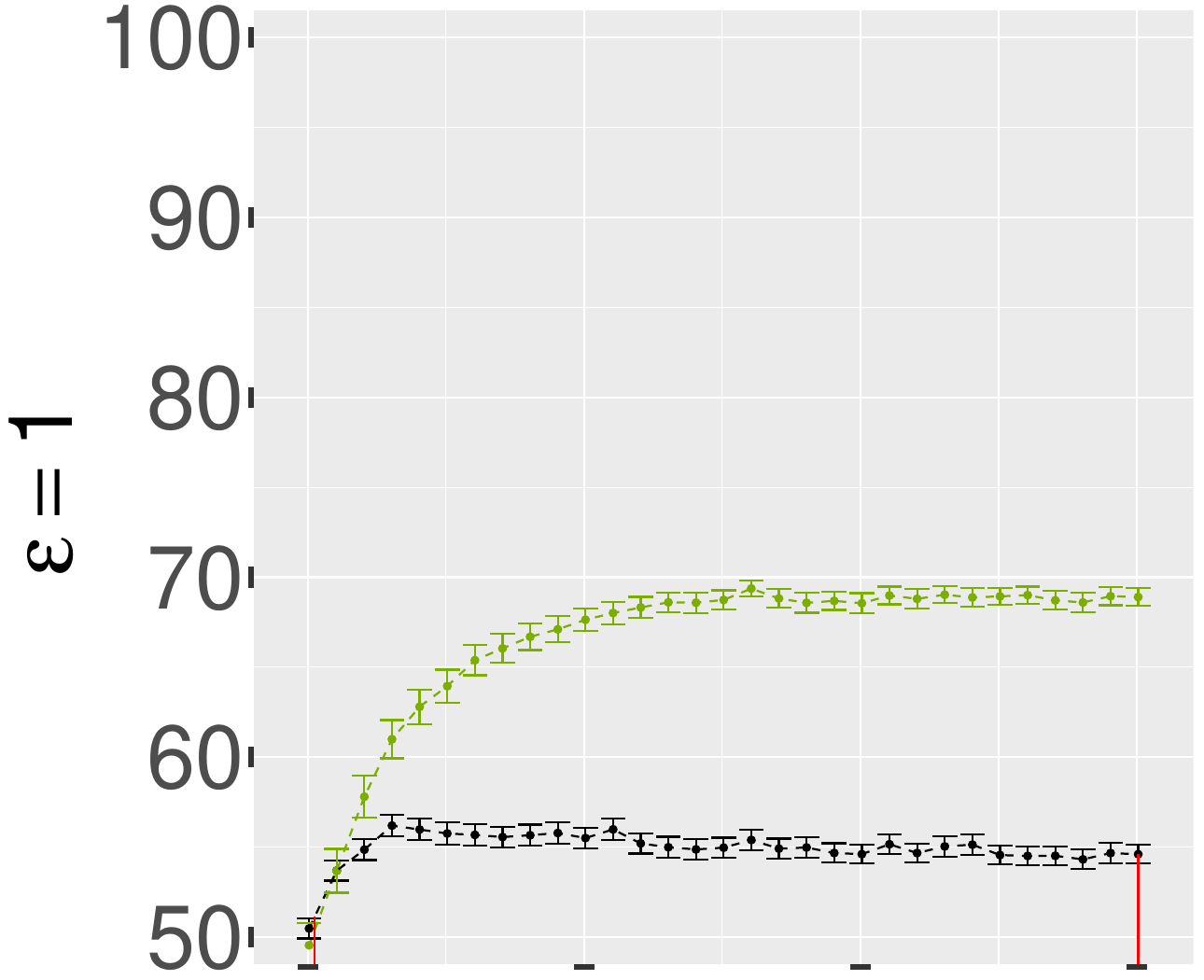}}
    \subfloat{\includegraphics[width=0.235\linewidth]{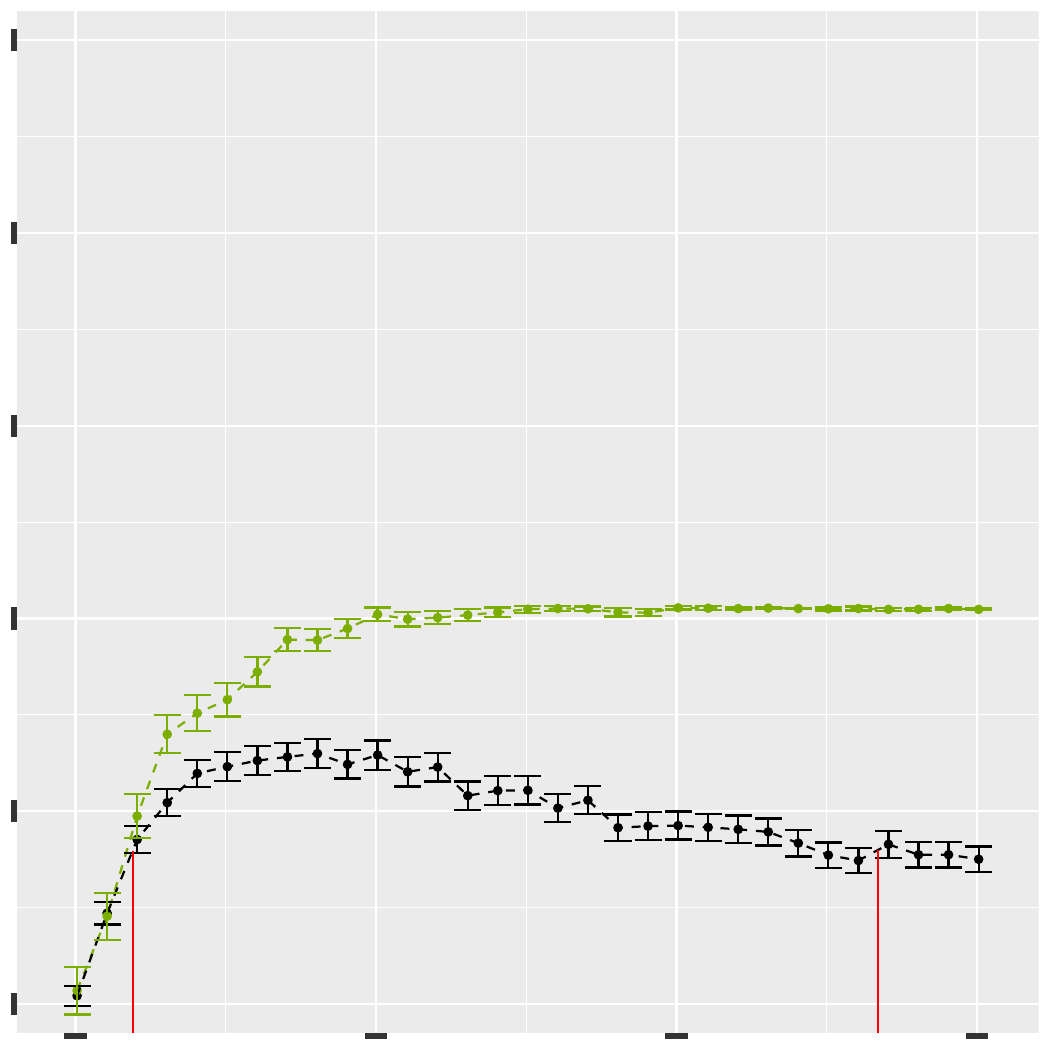}}
    \subfloat{\includegraphics[width=0.235\linewidth]{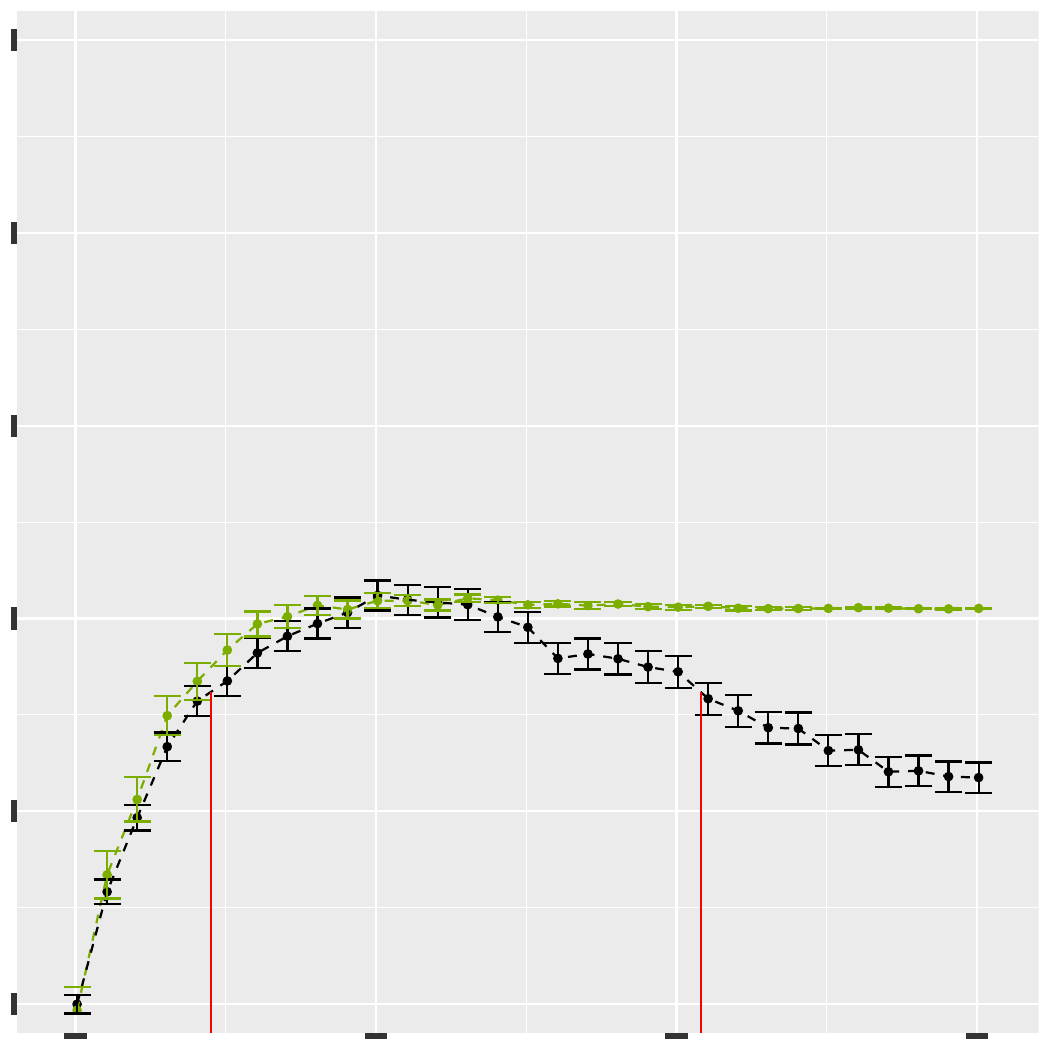}}
    \subfloat{\includegraphics[width=0.235\linewidth]{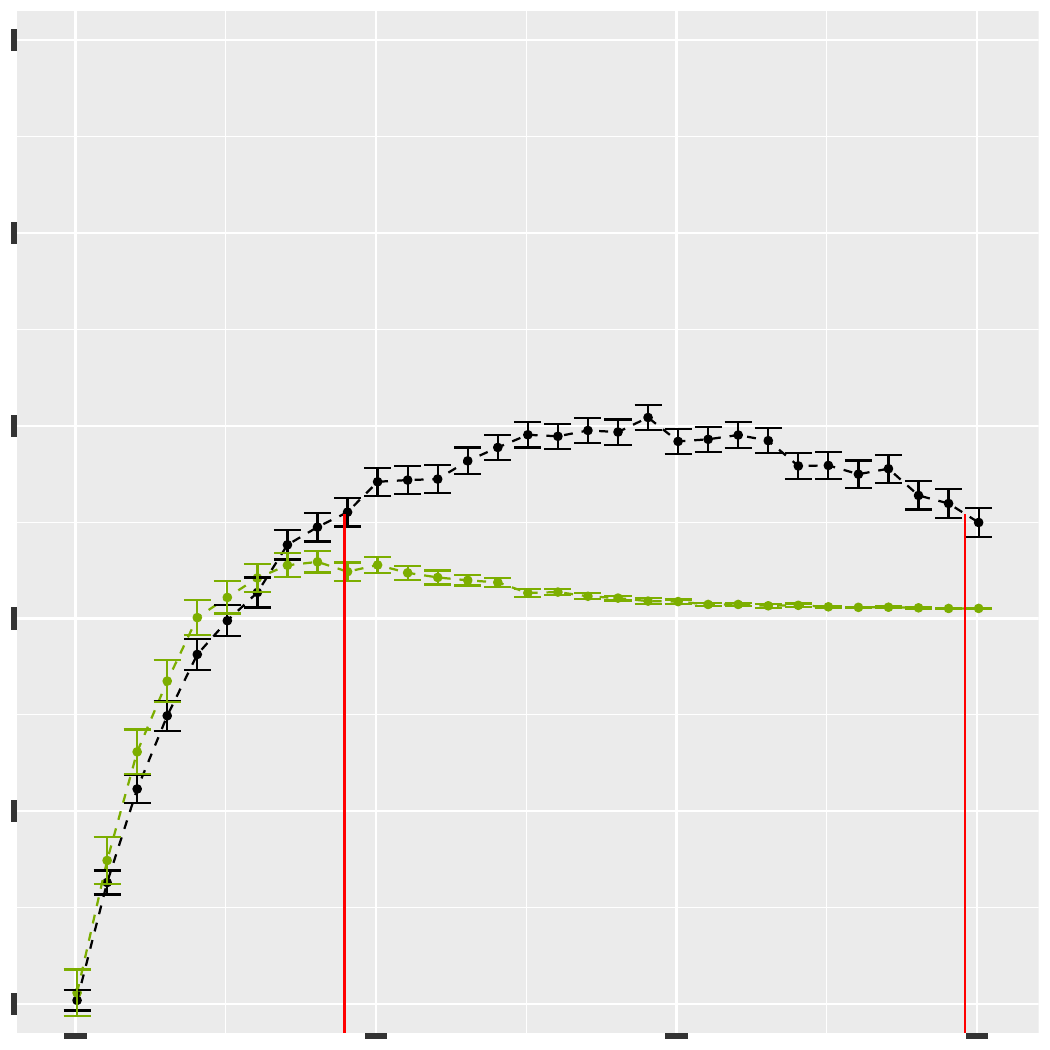}}
    \vspace{-8pt}

    \subfloat{\includegraphics[width=0.285\linewidth]{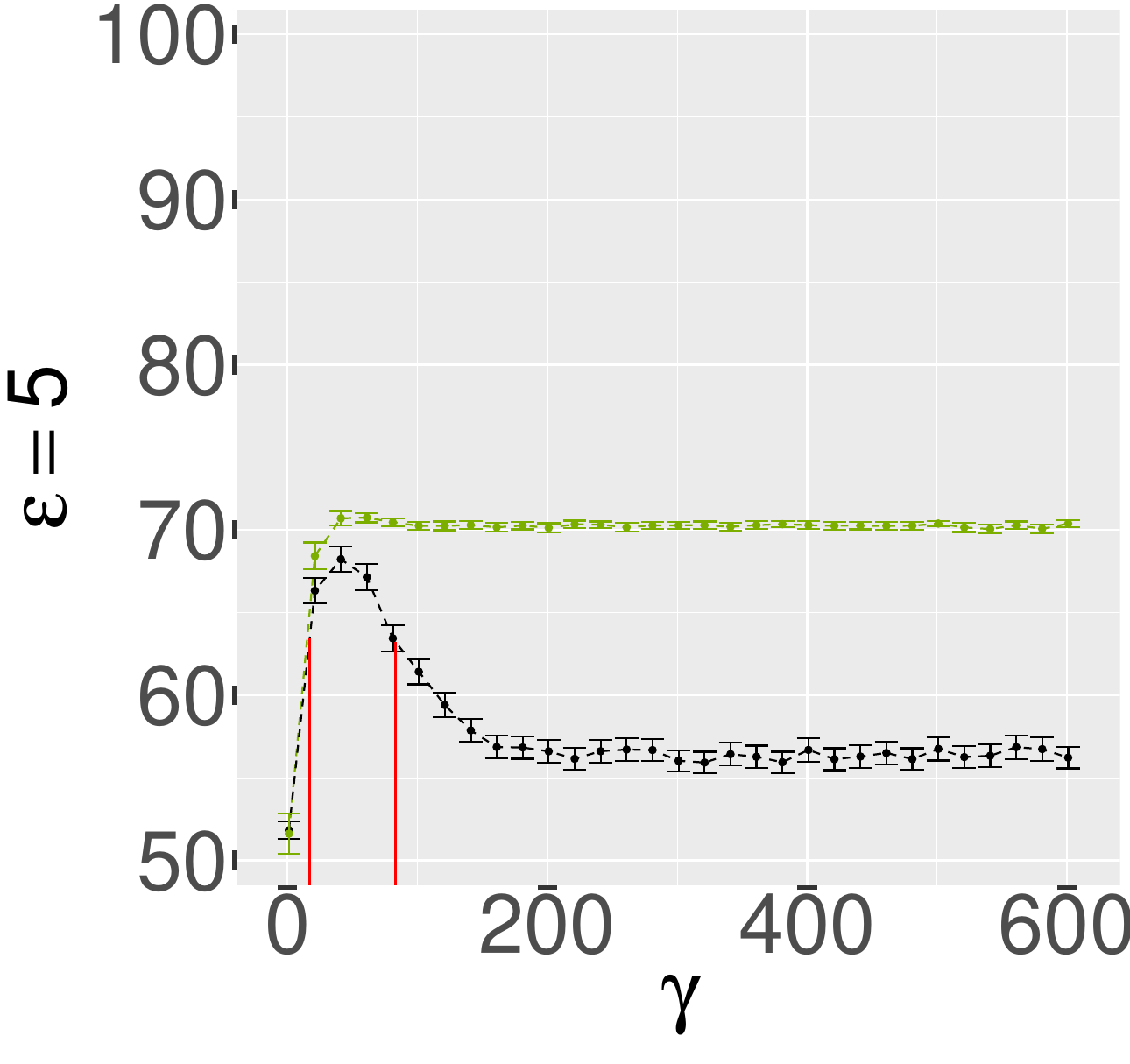}}
    \subfloat{\includegraphics[width=0.235\linewidth]{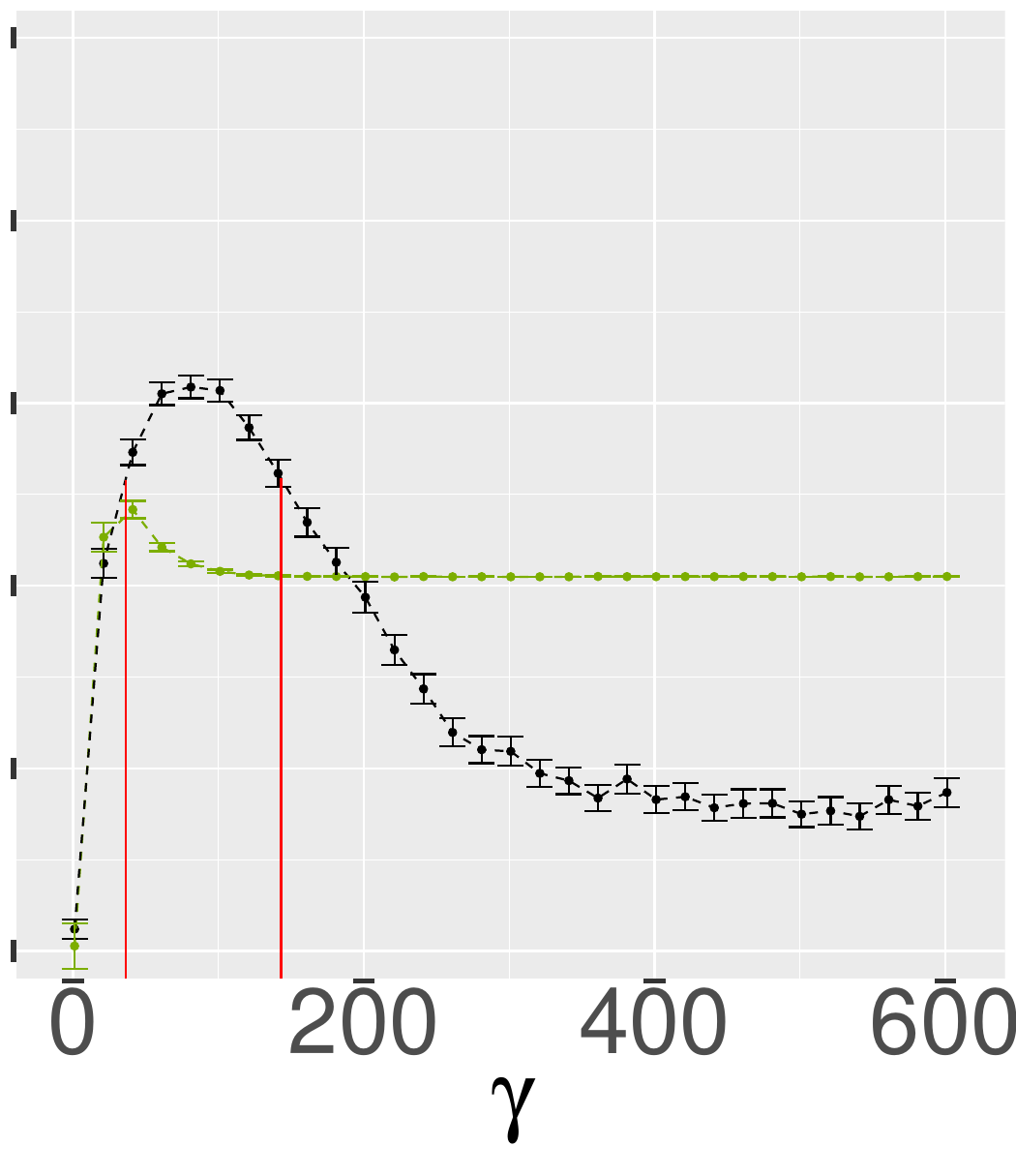}}
    \subfloat{\includegraphics[width=0.235\linewidth]{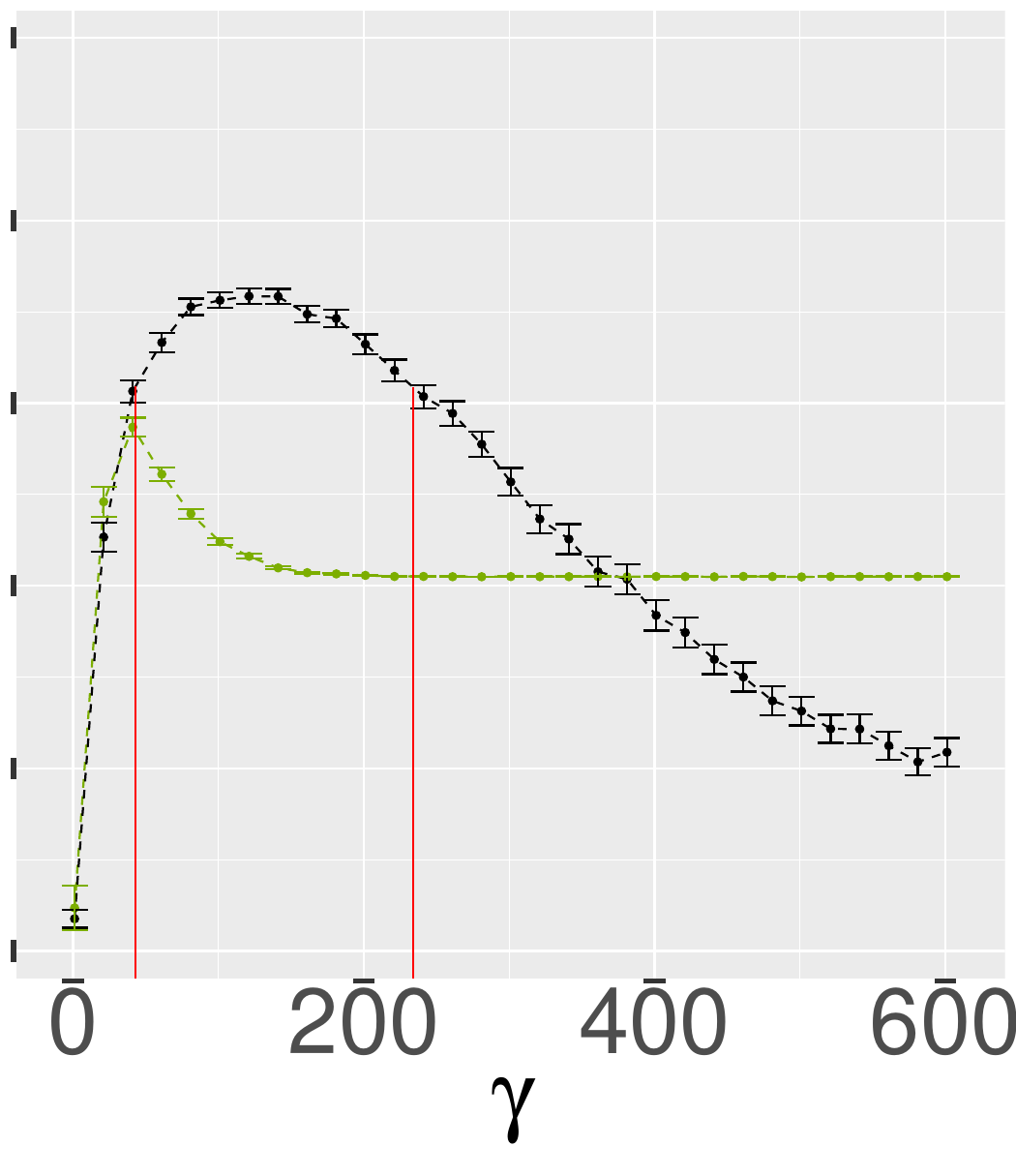}}
    \subfloat{\includegraphics[width=0.235\linewidth]{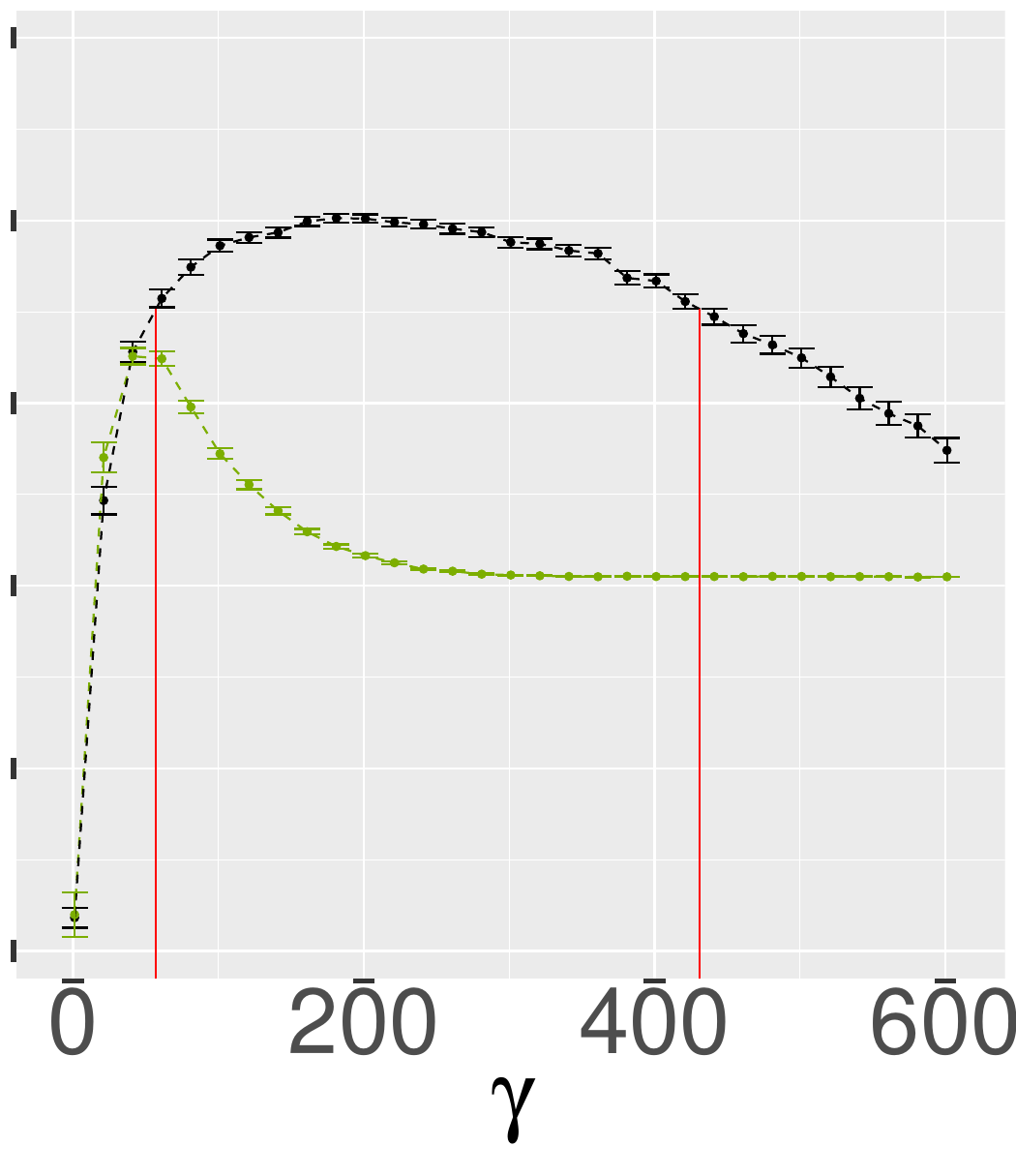}}
    \vspace{-10pt}
    \caption{Sensitivity Analysis of $\gamma$ Selection: average (95\% CI) validation set accuracy (\%) at different $\gamma$ for various combinations of $\epsilon$ and $n$ with varying distributions for features $\mathbf{x}$ in independent dataset $D_0$ (green: truncated normal (between 0 and 1) normal with $\sigma=0.3$ and $\mu_1\!=\!0.25, \mu_2\!=\!0.5, \mu_3\!=\!0.75, \mu_4\!=\!0.5$ for $x_1$ to $x_4$, respectively; black: $\mathbf{x}\sim$ Unif$(0,1)$, the baseline case). The red vertical lines mark the region of $\gamma$ where the corresponding accuracy in the baseline case is within $5\%$ of the highest accuracy.}
    \label{fig:sens_study_dist}\vspace{-12pt}
\end{figure}

\vspace{-3pt}\section{Discussion}
\label{sec:discussion}
We have proposed a general DP-wERM algorithm to allow for weighted loss functions in ERM problems.  We applied the DP-wERM algorithm to OWL that is often used in medical studies. The experimental results demonstrate the feasibility of achieving satisfactory performance in learning optimal ITR  and empirical treatment value, while ensuring formal privacy guarantees for individuals at a reasonably small privacy budget in datasets of adequate sample sizes. This encouraging finding reassures study participants that their privacy is protected and demonstrate to researchers and investigators the practical utility of privacy-preserving ITRs trained via DP-wERM. 

This work represents an early effort in developing DP-ITR methods, applies the proposed DP-wERM to the static treatment OWL framework and focuses the evaluation in RCTs in the experiments. Future work should explore DP algorithms for alternative ITR frameworks like RWL and M-learning that can be more effective or general than OWL for ITR learning. As a a starting point, we derive the $\ell_2$ global sensitivity of the M-learning output and present the result in the SM. 
It is also of interest to extend the DP procedures to broader clinical applications, including dose-finding trials, multi-treatment settings, and multi-stage decision-making.  
Evaluating DP-OWL in larger observational regimes, where ITRs are commonly applied -- such as EHRs, personalized advertising, and recommender systems --  is another promising direction. Because these applications draw massive datasets from diverse sources (e.g., clinical networks, insurance, or e-commerce) compared to typical randomized trials, the larger sample sizes are expected to improve the DP-wERM algorithm's empirical utility.

Future work will also look into improving the DP-wERM  performance under limited sample sizes or small privacy budgets -- a challenging problem given the inherently competing objectives of DP and OWL:  DP bounds individual influence to protect privacy whereas OWL explicitly weights specific individual contributions to optimize treatment rules. This tension only intensifies  as $n$ or $\epsilon$ decreases. 
\cite{chaudhuri2011} shows that objective function perturbation for unweighted ERM problems typically yields higher utility than output perturbation at the same privacy loss, suggesting adapting objective perturbation to the wERM framework is a promising direction. Additionally, relaxing strict $\epsilon$-DP to $(\epsilon, \delta)$-DP or alternative DP notions could offer an advantageous utility trade-off.

\section*{Data and Code}
The general DP-wERM algorithm is coded in R package  \texttt{DPpack} \cite{DPpack, Giddens2023}. The code for this work is at \url{https://github.com/sgiddens/DP-OWL}. 
Due to protocol restrictions, the SJCRH data are available only upon St. Jude IRB approval and submission of a formal research proposal to Drs. Kevin Krull and Tara Brinkman. The data from the CYP-GUIDES study is available at Kaggle \cite{kaggle_cypguides}.

\bibliographystyle{IEEEtran}
\bibliography{references}

@article{Zhao2012,
  ISSN = {01621459},
  author = {Yingqi Zhao 
    and Donglin Zeng 
    and John Rush 
    and Michael Kosorok},
  journal = {J. Amer. Statist. Assoc.},
  number = {499},
  pages = {1106-1118},
  title = {Estimating Individualized Treatment Rules Using Outcome Weighted Learning},
  volume = {107},
  year = {2012}
}

@article{Qian2011,
  author = {Min Qian and Susan A. Murphy},
  title = {Performance Guarantees for Individualized Treatment Rules},
  volume = {39},
  journal = {Ann. Statist.},
  number = {2},
  pages = {1180 - 1210},
  year = {2011}
}

@article{Sun2015, 
  title={Causal Inference via Sparse Additive Models with Application to Online Advertising}, 
  volume={29}, 
  number={1}, 
  journal={Proc.  AAAI Conf. Artif. Intell.}, 
author={Wei Sun and Pengyuan Wang and Dawei Yin and Jian Yang and Yi Chang}, 
  year={2015}
}

@inproceedings{Wang2015,
  author = {Pengyuan Wang and Wei Sun and Dawei Yin and Jian Yang and Yi Chang},
  title = {Robust Tree-Based Causal Inference for Complex Ad Effectiveness Analysis},
  year = {2015},
  isbn = {9781450333177},
   booktitle = {Proc. Eighth ACM Int. Conf. Web Search Data Mining (WSDM)},
  pages = {67-76}
}

@inproceedings{Schnabel2016,
  author = {Tobias Schnabel and Adith Swaminathan and Ashudeep Singh and Navin Chandak and Thorsten Joachims},
  title = {Recommendations as Treatments: {D}ebiasing Learning and Evaluation},
  year = {2016},
  booktitle = {Proc. 33rd Int. Conf. Mach. Learn.},
  pages = {1670–1679}
}

@inproceedings{Lada2019,
  author = {Akos Lada and Alexander Peysakhovich and Diego Aparicio and Michael Bailey},
  title = {Observational Data for Heterogeneous Treatment Effects with Application to Recommender Systems},
  year = {2019},
  booktitle = {Proc. Conf. Econ. Comput.},
  pages = {199-213}
}

@INPROCEEDINGS{Narayanan2008,
  author={Arvind Narayanan and Vitaly Shmatikov},
  booktitle={Proc. IEEE Symp. Secur. Privacy}, 
  title={Robust De-anonymization of Large Sparse Datasets}, 
  year={2008},
  pages={111-125}
}

@article{Sweeney2015,
  author={Latanya Sweeney},
  title={Only You, Your Doctor, and Many Others May Know},
  year={2015},
  month={September},
  day={28},
  url={https://techscience.org/a/2015092903/},
  journal={Technology Science}
}

@article{Ahn2015,
    title={Whose Genome Is It Anyway?: {R}e-identification and Privacy Protection in Public and Participatory Genomics},
    author={Sejin Ahn},
    journal={The San Diego law review},
    year={2015},
    volume={52},
    pages={751}
}

@misc{Desfontain2021,
    author = {Damien Desfontain},
    year = {2021},
    title = {Demystifying the {US Census Bureau's} reconstruction attack},
    howpublished = {\url{https://desfontain.es/blog/us-census-reconstruction-attack.html}},
    note = {Accessed: August 20, 2025}
}

@article{balle2018privacy,
  title={Privacy Amplification by Subsampling: {T}ight Analyses via Couplings and Divergences},
  author={Balle, Borja and Barthe, Gilles and Gaboardi, Marco},
  journal={NeuRIPS},
  volume={31},
  year={2018}
}

@INPROCEEDINGS{Shokri2017,
  author={Reza Shokri 
    and Marco Stronati 
    and Congzheng Song 
    and Vitaly Shmatikov},
  booktitle={2017 Proc. IEEE Symp. Secur. Privacy (SP)},
  title={Membership Inference Attacks Against Machine Learning Models},
  year={2017},
  volume={},
  number={},
  pages={3-18}
}

@INPROCEEDINGS{Zhao2021,
  author={Benjamin Zi and Hao Zhao 
    and Aviral Agrawal
    and Catisha Coburn
    and Hassan Jameel Asghar
    and Raghav Bhaskar 
    and Mohamed Ali Kaafar 
    and Darren Webb 
    and Peter Dickinson},
  booktitle={2021 IEEE European Symposium on Security and Privacy},
  title={On the (In)Feasibility of Attribute Inference Attacks on Machine Learning Models},
  year={2021},
  volume={},
  number={},
  pages={232-251}}

@InProceedings{Dwork2006,
  author={Cynthia Dwork
    and Frank McSherry
    and Kobbi Nissim
    and Adam Smith},
  title={Calibrating Noise to Sensitivity in Private Data Analysis},
  booktitle={Theory of Cryptography},
  year={2006},
  publisher={Springer Berlin Heidelberg},
  pages={265-284}
}

@inproceedings{Chaudhuri2008,
    author = {Chaudhuri, Kamalika and Monteleoni, Claire},
    title = {Privacy-preserving logistic regression},
    year = {2008},
    isbn = {9781605609492},
    booktitle = {Proc. NeurIPS},
    pages = {289-296}
}

@article{chaudhuri2011,
  author  = {Kamalika Chaudhuri
    and Claire Monteleoni
    and Anand D. Sarwate},
  title   = {Differentially Private Empirical Risk Minimization},
  journal = {J. Mach. Learn. Res.},
  year    = {2011},
  volume  = {12},
  number  = {29},
  pages   = {1069-1109}
}

@inproceedings{abadi2016deep,
  title={Deep Learning with Differential Privacy},
  author={Abadi, Martin and Chu, Andy and Goodfellow, Ian and McMahan, H Brendan and Mironov, Ilya and Talwar, Kunal and Zhang, Li},
  booktitle={Proc. ACM Conf. Comput. Commun. Secur. (CCS)},
  pages={308-318},
  year={2016}
}

@inproceedings{mcsherry2007mechanism,
  title={Mechanism Design via Differential Privacy},
  author={McSherry, Frank and Talwar, Kunal},
  booktitle={Proc. 48th FOCS},
  pages={94-103},
  year={2007}
}

@InProceedings{Kifer2012,
  title = 	 {Private Convex Empirical Risk Minimization and High-dimensional Regression},
  author = 	 {Daniel Kifer
    and Adam Smith
    and Abhradeep Thakurta},
  booktitle = {Proc. 25th Conf. Learn. Theory (COLT)},
  pages = 	 {25.1-25.40},
  year = 	 {2012},
  volume = 	 {23}
}

@article{Rubenstein2012, 
  title={Learning in a Large Function Space: {P}rivacy-Preserving Mechanisms for {SVM} Learning},
  volume={4},
  number={1},
  journal={Journal of Privacy and Confidentiality},
  author={Benjamin I. P. Rubinstein
    and Peter L. Bartlett 
    and Ling Huang 
    and Nina Taft},
  year={2012}
}

@article{Zhou2017,
  author = {Xin Zhou 
    and Nicole Mayer-Hamblett 
    and Umer Khan 
    and Michael Kosorok},
  title = {Residual Weighted Learning for Estimating Individualized Treatment Rules},
  journal = {J. Amer. Statist. Assoc.},
  volume = {112},
 number = {517},
  pages = {169-187},
  year  = {2017}}

@article{Wu2020,
  author = {Peng Wu 
    and Donglin Zeng 
    and Yuanjia Wang},
  title = {Matched Learning for Optimizing Individualized Treatment Strategies Using Electronic Health Records},
  journal = {J. Amer. Statist. Assoc.},
  volume = {115},
  number = {529},
  pages = {380-392},
  year  = {2020}
}

@InProceedings{Niu2022,
  title = {Differentially Private Estimation of Heterogeneous Causal Effects},
  author = {Fengshi Niu and Harsha Nori and Brian Quistorff and Rich Caruana and Donald Ngwe and Aadharsh Kannan},
  booktitle = {Proc. First Proc. Conf. Causal Learn. Reasoning (CLEAR)},
  pages = {618-633},
  year = {2022},
  volume = 	 {177}
}

@unpublished{Betlei2021,
  title = {Differentially Private Individual Treatment Effect Estimation from Aggregated Data},
  author = {Artem Betlei and Th{\'e}ophane Gregoir and Thibaud Rahier and Alo{\"i}s Bissuel and Eustache Diemert and Massih-Reza Amini},
  url = {https://hal.science/hal-03339723},
  year = {2021},
  month = {Sep},
  HAL_ID = {hal-03339723},
  HAL_VERSION = {v1}
}

@inproceedings{Rahimi2007,
 author = {Ali Rahimi and
    Benjamin Recht},
 booktitle = {NeuRIPS},
 pages = {},
 title = {Random Features for Large-Scale Kernel Machines},
 volume = {20},
 year = {2007}
}

@inproceedings{Rahimi2008,
 author = {Ali Rahimi and Benjamin Recht},
 booktitle = {NeuRIPS},
 pages = {},
 title = {Weighted Sums of Random Kitchen Sinks: {R}eplacing Minimization with Randomization in Learning},
 volume = {21},
 year = {2008}
}

@inproceedings{Bassily2014,
  author={Raef Bassily and Adam Smith and Abhradeep Thakurta},
  booktitle={Proc. 55th FOCS}, 
  title={Private Empirical Risk Minimization: {E}fficient Algorithms and Tight Error Bounds}, 
  year={2014},
  volume={},
  number={},
  pages={464-473}
}

@InProceedings{Kasiviswanathan2016,
  title = 	 {Efficient Private Empirical Risk Minimization for High-dimensional Learning},
  author = 	 {Shiva Prasad Kasiviswanathan and Hongxia Jin},
  booktitle = 	 {Proc. 33rd Int. Conf. Mach. Learn.},
  pages = 	 {488-497},
  year = 	 {2016},
  volume = 	 {48}
}

@InProceedings{Dwork2006b,
  author={Cynthia Dwork
    and Krishnaram Kenthapadi
    and Frank McSherry
    and Ilya Mironov
    and Moni Naor},
  title={Our Data, Ourselves: {P}rivacy Via Distributed Noise Generation},
  booktitle={Adv. Cryptol. (EUROCRYPT'06)},
  year={2006},
  publisher={Springer Berlin Heidelberg},
  pages={486-503}
}

@INPROCEEDINGS{Dwork2010,
  author={Cynthia Dwork
    and Guy N. Rothblum 
    and Salil Vadhan},
  booktitle={Proc. 51st FOCS}, 
  title={Boosting and Differential Privacy}, 
  year={2010},
  volume={},
  number={},
  pages={51-60}
}

@ARTICLE{Liu2019,
  author = {Fang Liu},
  journal={IEEE Trans.~ on Knowledge and Data Engineering},
  title={Generalized Gaussian Mechanism for Differential Privacy},
  year={2019},
  volume={31},
  number={4},
  pages={747-756}
}

@ARTICLE{Chapelle2007,
  author={Olivier Chapelle},
  journal={Neural Computation},
  title={Training a Support Vector Machine in the Primal},
  year={2007},
  volume={19},
  number={5},
  pages={1155-1178}
}

@article{Dong2022,
    author = {Dong, Jinshuo and Roth, Aaron and Su, Weijie J.},
    title = {Gaussian Differential Privacy},
    journal = {Journal of the Royal Statistical Society Series B: Statistical Methodology},
    volume = {84},
    number = {1},
    pages = {3-37},
    year = {2022},
    month = {02},
    issn = {1369-7412}
}

@InProceedings{Thakurta2013,
  title = 	 {Differentially Private Feature Selection via Stability Arguments, and the Robustness of the Lasso},
  author = 	 {Abhradeep Guha Thakurta and Adam Smith},
  booktitle = 	 {Proc. 26th Conf. Learn. Theory (COLT)},
  pages = 	 {819-850},
  year = 	 {2013},
  volume = 	 {30}
}

@INPROCEEDINGS{Steinke2017,
  author={Thomas Steinke and Jonathan Ullman},
  booktitle={Proc. 58th FOCS}, 
  title={Tight Lower Bounds for Differentially Private Selection}, 
  year={2017},
  volume={},
  number={},
  pages={552-563}
}

@article{Chaudhuri2013,
  title={A Near-Optimal Algorithm for Differentially-Private Principal Components.},
  author={Kamalika Chaudhuri and Anand D Sarwate and Kaushik Sinha},
  journal={J. Mach. Learn. Res.},
  volume={14},
  year={2013}
}

@Manual{DPpack,
  title = {{DPpack}: {D}ifferentially Private Statistical Analysis and Machine Learning},
  author = {Spencer Giddens and Fang Liu},
  note = {{R} package version 0.1.0},
  url = {https://cran.r-project.org/package=DPpack},
  year = {2023}
}

@article{Giddens2023,
  title={{DPpack}: An {R} Package for Differentially Private Statistical Analysis and Machine Learning}, 
  author={Spencer Giddens and Fang Liu},
  year={2023},
  journal={arXiv preprint arXiv:2309.10965},
}

@misc{national2011toward,
  title={National Research Council: Toward Precision Medicine: {B}uilding a Knowledge Network for Biomedical Research and a New Taxonomy of Disease},
  year={2011},
  publisher={National Academies Press}
}

@article{collins2015new,
  title={A New Initiative on Precision Medicine},
  author={Collins, Francis S and Varmus, Harold},
  journal={New England Journal of Medicine},
  volume={372},
  number={9},
  pages={793-795},
  year={2015}
}

@article{chen2016personalized,
  title={Personalized Dose Finding Using Outcome Weighted Learning},
  author={Chen, Guanhua and Zeng, Donglin and Kosorok, Michael R},
  journal={J. Amer. Statist. Assoc.},
  volume={111},
  number={516},
  pages={1509-1521},
  year={2016}
}

@inproceedings{zhou2018outcome,
  title={Outcome-Weighted Learning for Personalized Medicine with Multiple Treatment Options},
  author={Zhou, Xuan and Wang, Yuanjia and Zeng, Donglin},
  booktitle={2018 IEEE 5th Int. Conf. Data Sci. Adv. Anal. (DSAA)},
  pages={565-574},
  year={2018}
}

@Inbook{eguchi2022outcome,
    author={Eguchi, Shinto
        and Komori, Osamu},
    title={Outcome Weighted Learning in Dynamic Treatment Regimes},
    bookTitle={Minimum Divergence Methods in Statistical Machine Learning: From an Information Geometric Viewpoint},
    year={2022},
    publisher={Springer Japan},
    address={Tokyo},
    pages={197-216},
    isbn={978-4-431-56922-0}
}

@article{lubas2022randomized,
  title={A Randomized Double-blind Placebo-controlled Trial of the Effectiveness of Melatonin on Neurocognition and Sleep in Survivors of Childhood Cancer},
  author={Lubas, Margaret and Mandrell, Belinda and Greene, William L and Howell, Carrie and Christensen, Robbin and Kimberg, Cara and Li, Chenghong and Ness, Kirsten and Srivastava, Deo Kumar and Hudson, Melissa},
  journal={Pediatric Blood \& Cancer},
  volume={69},
  number={1},
  pages={e29393},
  year={2022}
}

@article{Spicker2024,
    author = {Spicker, Dylan and Moodie, Erica E. M. and Shortreed, Susan M.},
    title = {Differentially Private Outcome-Weighted Learning for Optimal Dynamic Treatment Regime Estimation},
    journal = {Stat},
    year = {2024},
    volume = {13},
    issue = {1},
    pages={e641}
}

@article{tortora2020clinical,
  title={Clinical database of the CYP-guides trial: an open data resource on psychiatric hospitalization for severe depression},
  author={Tortora, Joseph and Robinson, Saskia and Baker, Seth and Rua{\~n}o, Gualberto},
  journal={Data in Brief},
  volume={30},
  pages={105457},
  year={2020},
  publisher={Elsevier}
}

@article{ruano2020results,
  title={Results of the CYP-GUIDES randomized controlled trial: Total cohort and primary endpoints},
  author={Rua{\~n}o, Gualberto and Robinson, Saskia and Holford, Theodore and Mehendru, Raveen and Baker, Seth and Tortora, Joseph and Goethe, John W},
  journal={Contemporary clinical trials},
  volume={89},
  pages={105910},
  year={2020},
  publisher={Elsevier}
}

@misc{kaggle_cypguides,
    author = {{Kaggle}},
  title        = {Clinical Dataset of the CYP-GUIDES Trial},
  howpublished = {\url{https://www.kaggle.com/datasets/shashwatwork/clinical-dataset-of-the-cypguides-trial}}
}
\vspace{-24pt}

\begin{IEEEbiography}[{\includegraphics[width=1in,height=1.25in,clip,keepaspectratio]{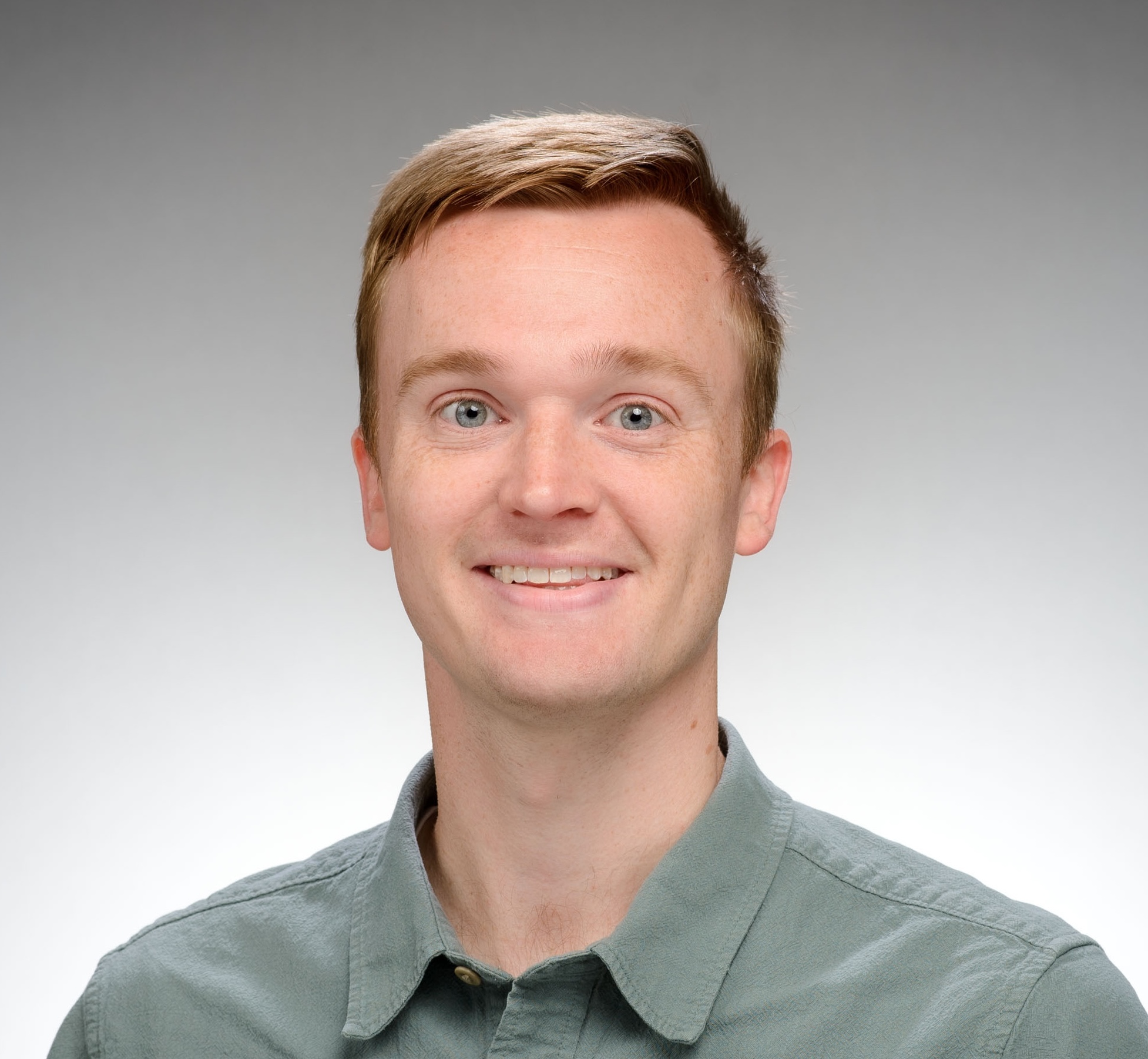}}]{Spencer Giddens}
is a Postdoctoral Researcher with the AI Trust and Reliability  Lab in the Lucy Family Institute for Data \& Society at the University of Notre Dame, Notre Dame, IN, USA.
He received his Ph.D. in Applied and Computational Mathematics and Statistics from Notre Dame in 2025. 
His research interests include differential privacy and its applications to statistical methods and machine learning, synthetic data, and AI fairness.
\end{IEEEbiography}

\vspace{-30pt}

\begin{IEEEbiography}[{\includegraphics[width=1in,height=1.25in,clip,keepaspectratio]{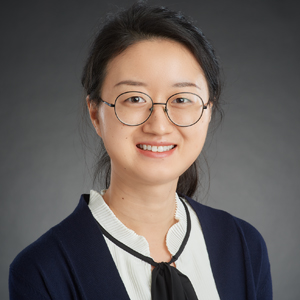}}]{Yiwang Zhou}
is an Assistant Member in the Department of Biostatistics at St. Jude Children's Research Hospital, Memphis, TN, USA. She received her Ph.D. in Biostatistics from the University of Michigan, Ann Arbor, USA. Her research interests include individualized treatment rules for precision medicine, development of machine learning models, and network analysis. 
\end{IEEEbiography}

\vspace{-30pt}

\begin{IEEEbiography}[{\includegraphics[width=1in,height=1.25in,clip,keepaspectratio]{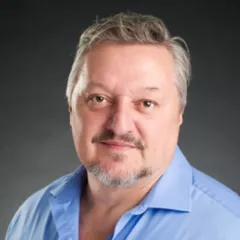}}]{Kevin R. Krull}
is a Member and Chair of the Department of Psychology and Biobehavioral Sciences and an Endowed Chair at St. Jude Children's Research Hospital, Memphis, TN, USA. He earned his Ph.D. in Clinical Psychology and Neuroscience from Florida State University, Tallahassee, Florida, USA. His research interests include neurocognitive outcomes of childhood cancer, late effects of cancer therapy, biologic and psychosocial moderators/mediators, and treatment of attention and executive function problems. 
\end{IEEEbiography}

\vspace{-30pt}

\begin{IEEEbiography}[{\includegraphics[width=1in,height=1.25in,clip,keepaspectratio]{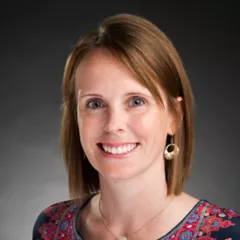}}]{Tara M. Brinkman}
is an Associate Member in the Department of Psychology and Biobehavioral Sciences and the Department of Epidemiology and Cancer Control at St. Jude Children's Research Hospital, Memphis, TN, USA. She earned her Ph.D. in Psychology from Michigan State University, East Lansing, MI, USA. Her research interests include neuro-behavioral late effects of childhood cancer, psychosocial impact of pediatric brain tumors, and psychological and behavioral interventions.
\end{IEEEbiography}

\vspace{-30pt}

\begin{IEEEbiography}[{\includegraphics[width=1in,height=1.25in,clip,keepaspectratio]{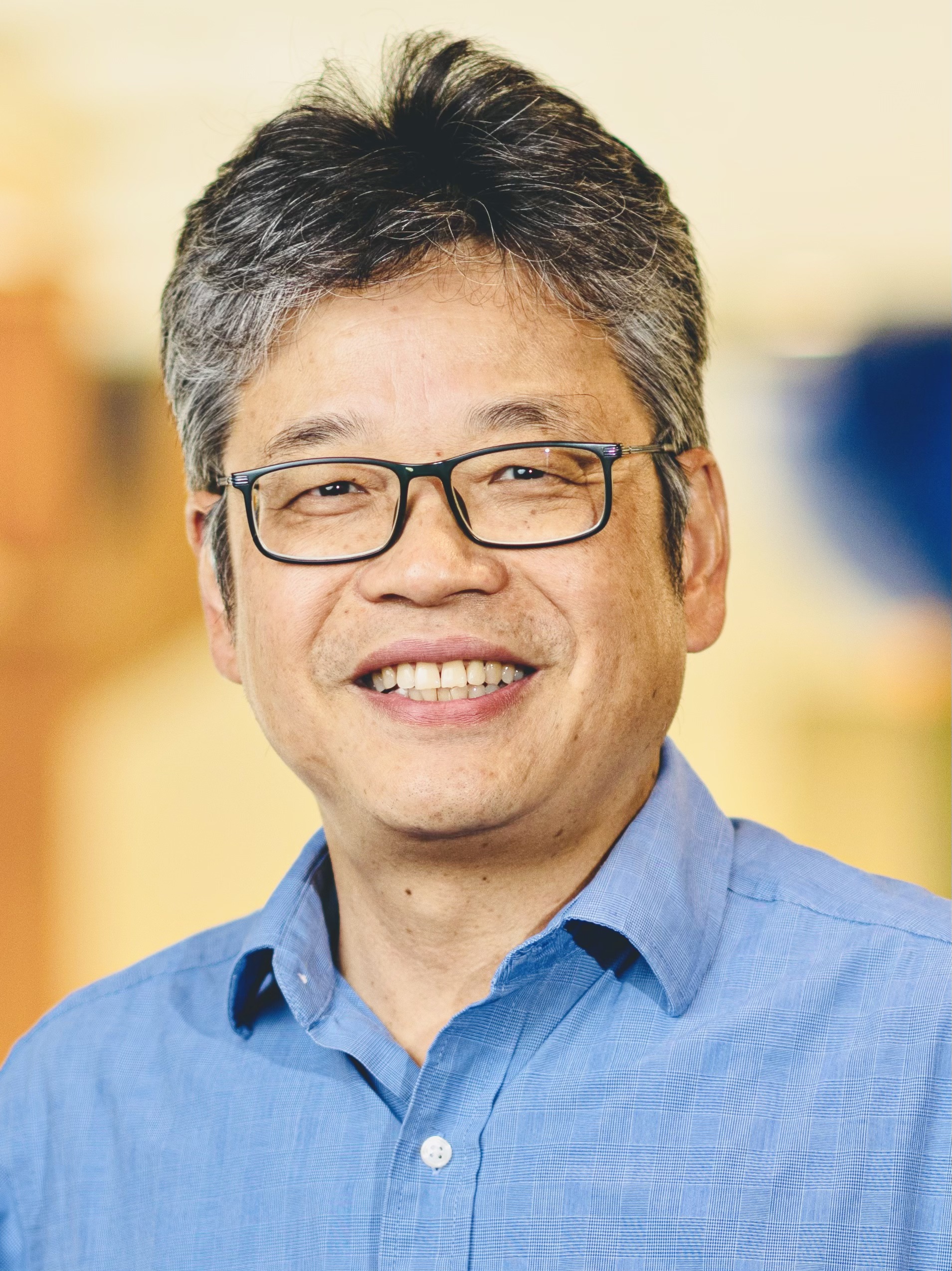}}]{Peter X. K. Song}
is a Professor at the Department of Biostatistics in the School of Public Health, University of Michigan, Ann Arbor, MI, USA. 
He received his Ph.D. in Statistics from the University of British Columbia, Canada. He is a Fellow of the American Statistical Association,  the Institute of Mathematical Statistics, and American Association for the Advancement of Science.
His research interests include data integration analytics, distributed inference, federated learning, mediation analysis and spatiotemporal modeling. 
\end{IEEEbiography}

\vspace{-30pt}

\begin{IEEEbiography}[{\includegraphics[width=1in,height=1.25in,clip,keepaspectratio]{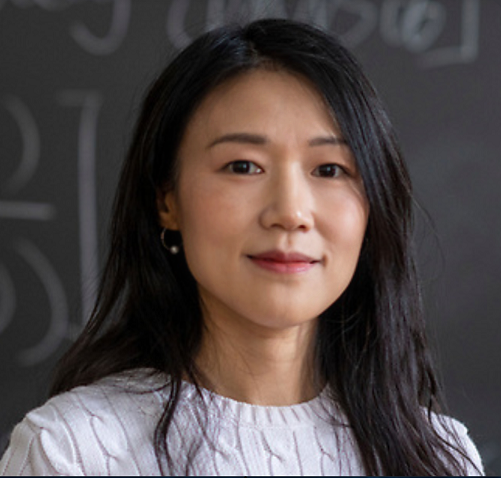}}]{Fang Liu}
is Notre Dame Collegiate Professor in the Department of Applied and Computational Mathematics and Statistics and Director of Health Data Exploration \& Analytics Lab at the Lucy Family Institute for Data \& Society at the University of Notre Dame, Notre Dame, IN, USA.  She obtained her Ph.D. from the University of Michigan, Ann Arbor, USA. She is a Fellow of the American Statistical Association. Her research interests include data privacy, synthetic data, trustworthy machine learning, and Bayesian statistics.
\end{IEEEbiography}


\vfill

\end{document}